%% file: iclr2027_conference.tex
\documentclass{article} 
\usepackage{iclr2027_conference,times}

\input{math_commands.tex}

\usepackage{graphicx, wrapfig, booktabs, multirow, subcaption, amsthm}
\usepackage[table]{xcolor}

\usepackage{hyperref}
\usepackage{url}

\newtheorem{lemma}{Lemma}
\newtheorem{proposition}{Proposition}

\title{Prior-Guided Single-Exposure HDR Novel View Synthesis via Base-Gain Representation}

\author{Armin Shafiee Sarvestani \qquad Sheyang Tang \qquad Jialu Xu \qquad Zhou Wang \\
Department of Electrical and Computer Engineering, University of Waterloo\\
\texttt{\{a5shafie, sheyang.tang, jialu.xu, zhou.wang\}@uwaterloo.ca} \\
}

\iclrfinalcopy 
\begin{document}

\maketitle
\lhead{arXiv Preprint}
\begin{abstract}
Single-exposure high dynamic range (HDR) novel view synthesis (NVS) recovers HDR scenes from low dynamic range (LDR) views captured at a single exposure and estimates camera response function (CRF) for LDR re-exposure. Without HDR ground-truth, using single-exposure LDR views as the sole source of direct supervision underconstrains HDR reconstruction and cannot ensure its faithful recovery, motivating the need for learned LDR-to-HDR reconstruction priors. Moreover, single-exposure LDR views supervise CRF only over a limited range, necessitating a learnable prior for reliable re-exposure beyond it. We propose PriorGS, a 3D Gaussian Splatting framework that addresses both challenges by (1) guiding HDR recovery with pseudo-HDR supervision from 2D single-exposure HDR reconstruction methods while anchoring it to input LDR views through a closed reconstruction loop; and (2) regularizing CRF modeling with a learnable prior that guarantees monotonicity and promotes plausible responses. Yet cross-view inconsistencies in independently generated pseudo-HDR targets impair scene reconstruction when HDR is encoded directly as Gaussian's color attribute. We therefore introduce a 3D base–gain HDR representation that factors HDR radiance into an LDR base and an overlaid 3D gain field, promoting robustness to such inconsistencies and enabling 3D HDR appearance editing. Extensive experiments demonstrate PriorGS's substantial superiority over baselines. 
\end{abstract}

\section{Introduction} \label{sec:intro}
High dynamic range (HDR) novel view synthesis (NVS) aims to reconstruct a 3D scene's geometry and HDR radiance from multi-view low dynamic range (LDR) observations, with the dual objective of synthesizing novel HDR views and rendering their LDR counterparts at unseen exposures. Compared with conventional LDR-NVS \citep{Nerf, GaussianSplats}, HDR-NVS represents a wider range of scene illumination, preserving textures in both highlights and shadows. This richer representation improves the visual appeal and realism of reconstructed scenes and benefits many downstream applications \citep{photorealistic_rendering, inverse_render, robotics}. Most HDR-NVS methods rely on multi-exposure LDR captures \citep{HDR_Nerf, HDRGS, GaussHDR} that typically require static scenes and tedious acquisition to avoid motion artifacts. Motivated by this limitation, recent methods, including ours, target HDR-NVS with more accessible single-exposure LDR inputs \citep{SeHDR, MonoHDR}. Also, as calibrated per-view HDR capture is rarely practical, we target HDR recovery without access to HDR ground-truth (GT).

HDR scene radiance represents linear scene-referred light over a wide intensity range. During LDR capture, this radiance is converted to sensor exposure, transformed by a nonlinear camera response function (CRF), quantized, and partially lost through highlight clipping and shadow noise \cite{EMoR}. Recovering HDR radiance from this degraded single-exposure observation is hence fundamentally underconstrained \citep{cheat_sihdr}, motivating 2D single-exposure HDR reconstruction methods that learn LDR-to-HDR reconstruction priors from large data \citep{intrinsic_hdr, single-hdr}. The same challenge arises in single-exposure HDR-NVS: using single-exposure views as the only source of direct supervision leaves HDR reconstruction underconstrained and cannot ensure its accurate recovery. This motivates the need for learned LDR-to-HDR reconstruction priors to guide HDR recovery in single-exposure HDR-NVS, a critical factor overlooked by prior works. Figure~\ref{fig:hdr_fail} illustrates this: when trained on the same single-exposure inputs, existing methods lacking learned LDR-to-HDR priors reproduce the observed exposure yet recover an incorrect HDR radiance. Our method, however, achieves high-fidelity HDR recovery by leveraging this prior.
\begin{figure}[t]
    \begin{center}
    \includegraphics[width=0.95\linewidth]{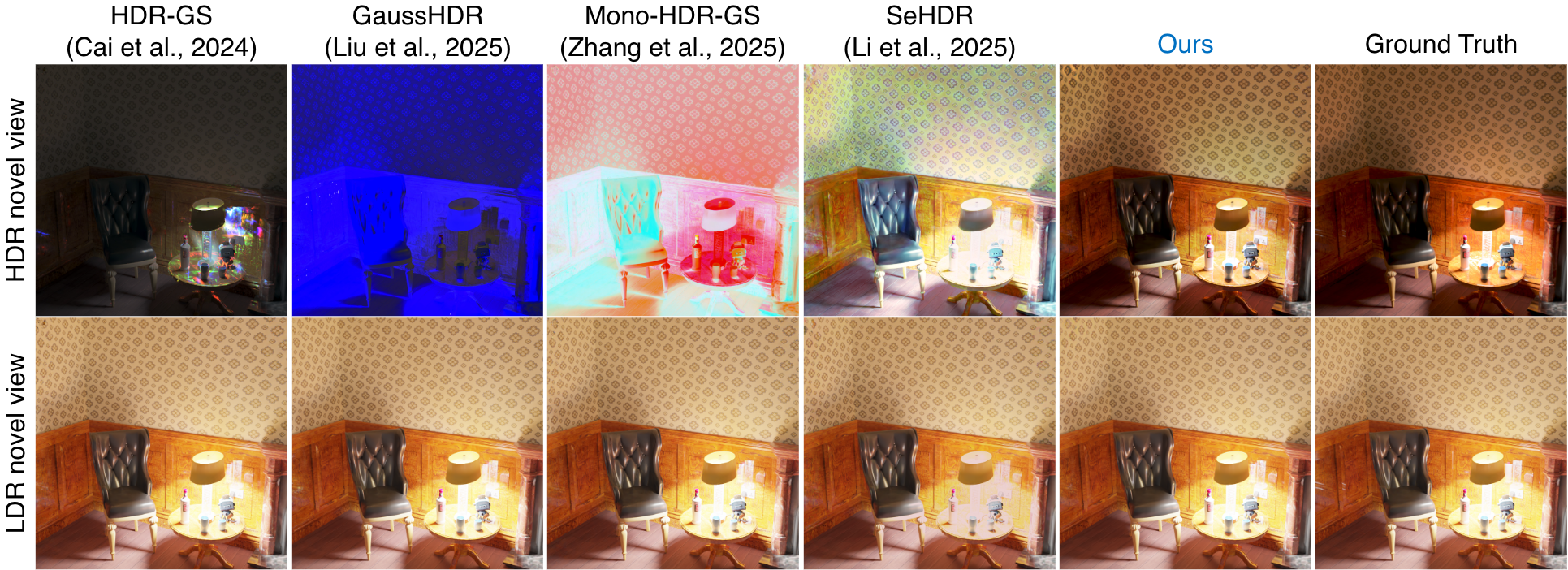}
    \end{center}
    \caption{Given the same single-exposure inputs, methods lacking learned LDR-to-HDR priors reconstruct the LDR novel view faithfully but fail at HDR radiance recovery. We leverage this prior to recover radiance much closer to the GT. HDR results are tone-mapped identically for visualization.}
    \label{fig:hdr_fail}
\end{figure}

Beyond high-fidelity HDR recovery, a holistic HDR-NVS method must accurately re-expose HDR radiance to LDR views at unseen exposures through CRF modeling, an objective often bypassed by existing single-exposure methods. Unlike multi-exposure captures that enable supervision for a wide CRF range, single-exposure views supervise only the camera’s limited unsaturated range at one exposure, leaving the CRF unconstrained elsewhere. Existing single-exposure HDR-NVS methods either (1) learn flexible MLP-based CRFs \citep{MonoHDR} that fit the observed exposure yet neither generalize well to unseen ones nor ensure the expected monotonicity (i.e. LDR intensity grows with input irradiance) \citep{EMoR}; or (2) fix a heuristically selected CRF per scene \citep{SeHDR} that risks response mismatch and precludes training-time adaptation. Single-exposure HDR-NVS thus needs a learnable CRF prior that regularizes response beyond the supervised range, guarantees monotonicity, and promotes plausible responses while retaining scene adaptability.
\begin{wrapfigure}{r}{0.45\textwidth}
    \begin{center}
    \includegraphics[width=\linewidth]{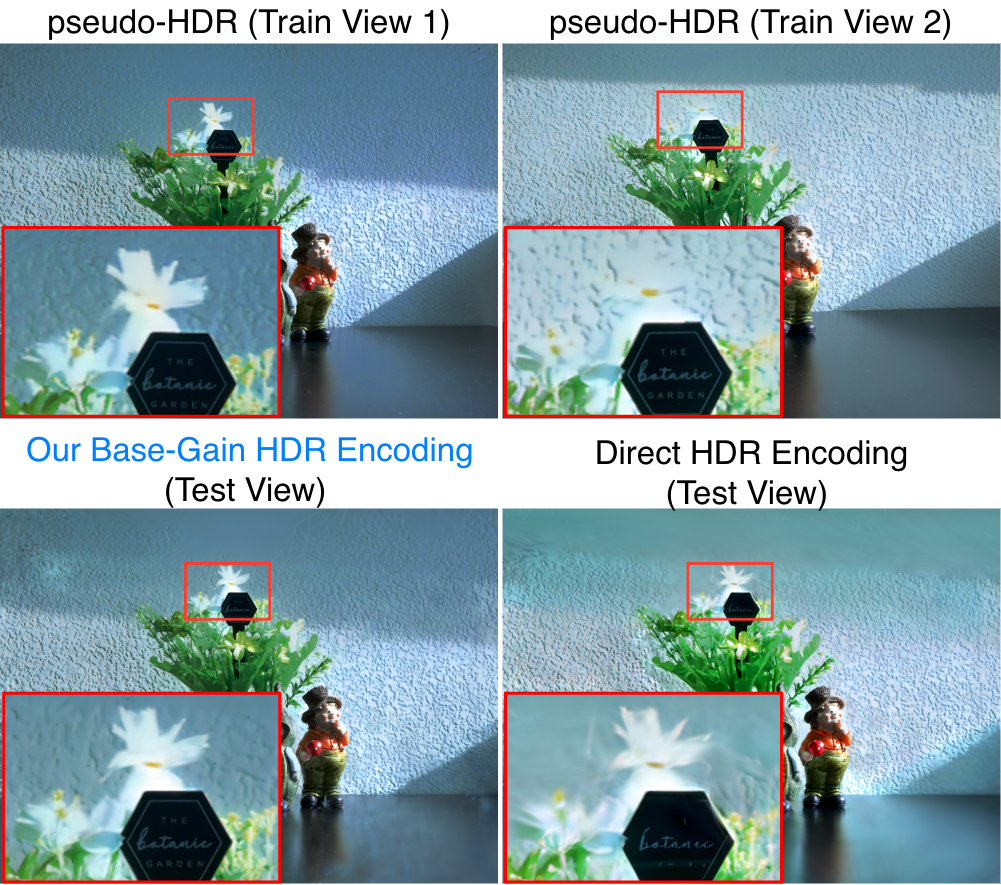}
    \end{center}
    \caption{(\textit{top row}) Cross-view inconsistencies in pseudo-HDR targets  (\textit{bottom row}) produce severe artifacts w/ direct HDR encoding, while our base–gain representation promotes robustness to such inconsistencies.}
    \label{fig:view_inconsist}
\end{wrapfigure}

In response, we propose PriorGS, a novel single-exposure HDR-NVS method based on 3D Gaussian Splatting (3DGS) \citep{GaussianSplats} that incorporates \textit{LDR-to-HDR reconstruction} and \textit{CRF priors} within a \textit{base–gain HDR representation}. To incorporate learned LDR-to-HDR reconstruction priors, we supervise HDR rendering at training views using pseudo-HDR targets generated by 2D single-exposure HDR reconstruction methods. However, as these targets are estimated separately per views, they may exhibit cross-view inconsistencies. As shown in Figures \ref{fig:view_inconsist} and \ref{fig:apx_abl_direct_hdr}, when HDR radiance is encoded directly as Gaussian's color attribute (direct HDR encoding), such inconsistencies impair scene reconstruction, causing severe color and geometry artifacts. To mitigate this, we represent HDR radiance as an LDR base supervised by LDR views and extended to HDR through a 3D gain field. This base-gain representation (1) anchors the scene to the more reliable LDR views while modeling the less reliable HDR extension by the 3D gain field, thereby promoting robustness to cross-view inconsistencies (Figures \ref{fig:view_inconsist} and \ref{fig:apx_abl_direct_hdr}); and (2) enables post-training HDR appearance editing in 3D without altering the LDR base. Next, we regularize CRF modeling with a learnable prior derived from a database of CRFs \citep{EMoR}. The CRF then re-exposes HDR renderings to match the same LDR views, closing an LDR-base $\rightarrow$ HDR $\rightarrow$ re-exposed LDR loop that anchors HDR recovery to accurate LDR reconstruction at both ends. 

Our contributions can be summarized as: (1) We propose PriorGS, a novel 3DGS-based single-exposure HDR-NVS framework that guides HDR reconstruction with learned LDR-to-HDR reconstruction priors while enforcing consistent LDR reconstruction before and after HDR recovery; (2) We introduce a base-gain 3D HDR representation that factors scene radiance into an LDR base and an overlaid 3D gain field, promoting robustness to cross-view pseudo-HDR inconsistencies and enabling HDR appearance editing in 3D; (3) We regularize CRF modeling in single-exposure HDR-NVS with a learnable prior, guaranteeing monotonicity and promoting plausible responses while retaining scene adaptability; and (4) Extensive experiments demonstrate that PriorGS substantially outperforms existing methods both quantitatively and qualitatively.

\section{Related Works}
\textbf{HDR Novel View Synthesis} seeks to reconstruct HDR scenes from either (1) LDR captures \citep{HDR_Nerf, ltm-nerf, HDRGS, NeXF}; (2) RAW images \citep{rawnerf, rawgs, 3dgs-in-the-dark, hdrsplat}; or (3) event–image pairs \citep{event1, event2}. We target the first as LDR images are easier to acquire and more widely available. Most LDR-based methods require multi-exposure LDR captures to jointly learn HDR radiance and LDR re-exposure \citep{HDRGS}, with recent approaches exploring unified 3D and 2D local tone-mapping \citep{GaussHDR}, physically-inspired appearance modeling \citep{PhysHDR}, and color-geometry-exposure disentanglement \citep{expo-gs}. However, these methods rely on multi-exposure supervision and thus do not address the underconstrained nature of single-exposure HDR-NVS. This limitation, together with tedious acquisition of multi-exposure images has motivated recent HDR-NVS methods using more accessible single-exposure LDR inputs \citep{SeHDR, MonoHDR}. Among single-exposure methods, SeHDR \citep{SeHDR} fuses bracketed 3D Gaussians into an HDR representation, whereas Mono-HDR-GS \citep{MonoHDR} learns MLP-based LDR-to-HDR and HDR-to-LDR mappings. However, without HDR GT, their HDR radiance recovery remains  underconstrained, lacking direct guidance for accurate HDR reconstruction and thus leading to unreliable recovery. PriorGS overcomes this by guiding HDR recovery with learned LDR-to-HDR reconstruction priors. Outside exposure-controllable HDR-NVS, PanoHDR-NeRF \citep{PanoHDR-NeRF} and GaSLight \citep{GasLight} use LDR-to-HDR predictions to train 3D HDR representations for relighting. However, they neither enforce consistency with original LDR views nor support LDR re-exposure. In contrast, PriorGS anchors prior-guided HDR recovery to accurate LDR reconstruction and re-exposure through closed-loop reconstruction. Finally, HDR-NVS requires reliable CRF modeling for re-exposure at unseen exposures. Multi-exposure methods employ either neural tone mappers \citep{HDRGS, GaussHDR, instanthdr} or explicit response parameterizations \citep{hdr-plenoxels}. These CRFs are constrained by multi-exposure inputs across a wide response range, while single-exposure inputs supervise only a narrow unsaturated region, leaving the remaining range unconstrained. Among single-exposure methods, Mono-HDR-GS \citep{MonoHDR} learns a flexible MLP-based CRF that fits the observed exposure but offers poor generalization beyond it and cannot ensure monotonicity. SeHDR \citep{SeHDR} instead heuristically fixes a CRF per scene but risks response mismatch and forfeits training-time adaptation. To address both limitations, PriorGS regularizes CRF modeling with a learnable prior that guarantees monotonicity expected of CRFs \citep{EMoR} and promotes plausible responses.

\textbf{Single-exposure 2D HDR Reconstruction} aims to recover an HDR image from a single LDR observation by reversing camera pipeline and inferring radiance degraded or lost through noise, quantization, and saturation. Approaches range from CNN- or Transformer-based prediction \citep{HDRCNN, ExpandNet, single-hdr, intrinsic_hdr, deephdr, Trasnformer-HDR} to GAN- and diffusion-based generation \citep{lediff, glowGAN, Expo-CM, x2hdr}, with recent works adopting dual-layer LDR-gain formulation that generates a 2D gain map over an LDR base \citep{gmnet, gmdiffusion}. A solution for single-exposure HDR-NVS is to use these methods to generate pseudo-HDR targets for views and train 3DGS with direct HDR encoding--an encoding prevalent in prior works \citep{HDRGS, GaussHDR}. However, cross-view inconsistencies in per-view generated pseudo-HDR targets impair scene reconstruction. PriorGS instead extends the 2D LDR–gain formulation to 3D, yielding a base–gain HDR representation that promotes robustness to such inconsistencies and enables 3D HDR appearance editing.

\section{Proposed Method} \label{sec:prop_method}
\begin{figure}[t]
    \begin{center}
    \includegraphics[width=0.95\linewidth]{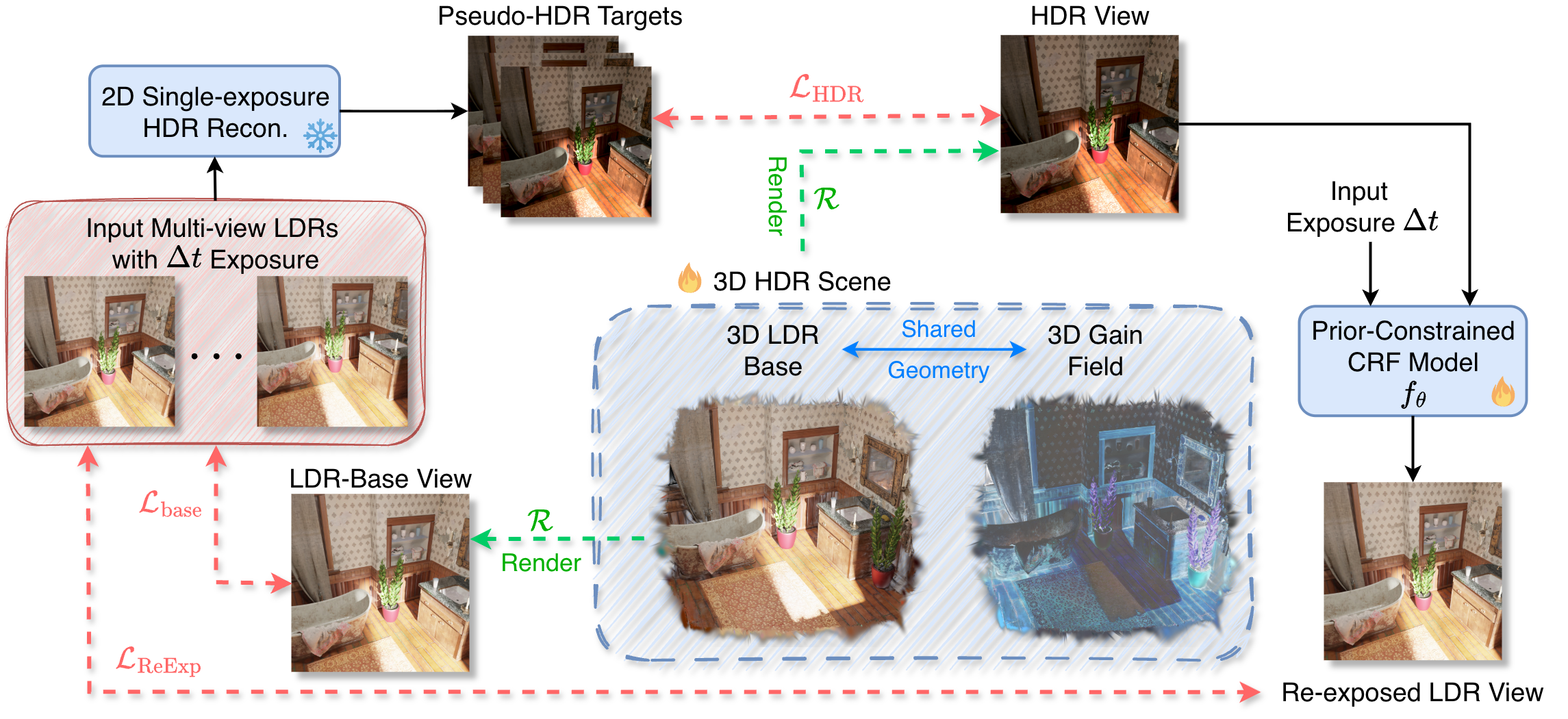}
    \end{center}
    \caption{Overview of \textbf{PriorGS}. Given single-exposure multi-view LDR captures of a scene, we reconstruct the 3D HDR scene as an LDR base extended by a 3D gain field. Input views supervise the LDR base, while pseudo-HDR targets generated by a 2D single-exposure HDR reconstruction model impose LDR-to-HDR prior on HDR recovery. We then re-expose the HDR view by a prior-constrained learnable CRF to match the same input view, which closes the reconstruction loop.}
    \label{fig:framework}
\end{figure}
\textbf{Problem Statement.} Given $V$ posed LDR views $L^v$ of a 3D scene, $\{ (L^v, \bm\pi^v)\}^V_{v=1}$, captured with camera parameters $\bm\pi^v$ at the same exposure $\Delta t$, we aim to jointly recover the HDR scene as a set of $N$ Gaussian primitives $\sG$ \citep{GaussianSplats} together with a scene-specific CRF $f_\vtheta$, where
\begin{equation}
    \sG = \{G_i(\vmu_i, \mSigma_i, \alpha_i, \bm\beta_i^c, \bm\beta_i^g)\}_{i=1}^{N}.
\end{equation}
Here, $G_i$ represents the $i$-th Gaussian and $\vtheta$ parameterizes learnable CRF $f_\vtheta$ for LDR re-exposure. Each Gaussian $G_i$ is composed of the following learnable parameters: (1) a position $\vmu_i \in \R^3$; (2) a 3D covariance $\mSigma_i = \mR_i\mS_i\mS_i^\mathrm{T}\mR_i^\mathrm{T}$ represented by rotation and scale matrices $\mR_i,\mS_i \in \R^{3\times3}$; (3) an opacity $\alpha_i \in \R$ controlling transparency; (4) RGB spherical harmonics (SH) coefficients $\bm{\beta}_i^c$ encoding view-dependent LDR-base color $\vc_i^v \in \R^3$; and (5) a second set of RGB SH coefficients $\bm{\beta}_i^g$  encoding view-dependent log-gain $\vg_i^v\in\R^3$ that lifts the LDR base to HDR radiance. We follow the standard 3DGS rasterization of \citet{GaussianSplats} and refer readers there for details.

\textbf{Overview.} Figure~\ref{fig:framework} shows an overview of PriorGS. We represent the 3D HDR scene using shared Gaussian geometry with two color components: an LDR base and a 3D gain field. Each Gaussian's gain lifts its LDR-base color to HDR radiance before rasterization, producing LDR-base and HDR views from the same geometry. A pretrained single-exposure HDR reconstruction method generates a pseudo-HDR target for each training view. The input LDR view and its pseudo-HDR target then supervise the LDR-base and HDR renderings, respectively. We next re-expose the HDR rendering with a prior-constrained learnable CRF and require the output to match the same LDR view. 

\subsection{Prior-Guided HDR Reconstruction via Base–Gain HDR Representation}
\textbf{Pseudo-HDR Guidance.} In Section \ref{sec:intro}, we discussed the necessity of using learned LDR-to-HDR reconstruction priors for accurate HDR reconstruction from single-exposure LDR views. We incorporate this prior through direct pseudo-HDR supervision: before training, we feed each training LDR view to a pretrained 2D single-exposure HDR reconstruction method and use the resulting pseudo-HDR target to supervise the rendered HDR view during training.

\textbf{Base-Gain HDR Representation.} As pseudo-HDR targets are generated independently for each view, they may have cross-view inconsistencies. When HDR radiance is encoded directly as a free-standing Gaussian color attribute (direct HDR encoding), such inconsistencies can impair scene reconstruction by driving unrestrained erroneous updates to Gaussians' attributes, resulting in severe color and geometry artifacts (see Figures \ref{fig:view_inconsist} and \ref{fig:apx_abl_direct_hdr}). To mitigate this and inspired by the modern LDR-gain 2D HDR formats \citep{ISOGainMap, UltraHDR, Gain-MLP}, we propose a base-gain 3D HDR representation that factors HDR radiance into an LDR base and an overlaid 3D gain field. Specifically, we implement each Gaussian $G_i$ to carry two sets of RGB SH
coefficients, $\bm{\beta}_i^c$ and $\bm{\beta}_i^g$, whose evaluations
for view $v$ produce its LDR-base color $\vc_i^v$ and log-gain
$\vg_i^v$, respectively. We then derive its view-dependent HDR radiance $\vh_i^v \in \R^3$ as 
\begin{equation} \label{eq:gain_eq}
    \vh_i^v = \left(\vl_i^v + \delta\right)\odot2^{\textstyle \,\vg_i^v}-\delta,
\end{equation}
where $\vl_i^v$ is the linear-light value obtained from $\vc_i^v$ via inverse gamma encoding, and $\delta=1/64$ facilitates gain application near blacks ($\vl_i^v=0$). Finally, for a camera $\bm\pi^v$, we render the LDR base $\widehat L_{\mathrm{B}}^v$ and HDR radiance $\widehat H^v$ with shared geometry but different color attributes $\vc_i^v$ and $\vh_i^v$:
\begin{equation}
    \widehat L_{\mathrm{B}}^v
    =
    \mathcal{R}\!\left(\sG,\bm\pi^v;\{\vc_i^v\}_{i=1}^{N}\right),
    \qquad
    \widehat H^v
    =
    \mathcal{R}\!\left(\sG,\bm\pi^v;\{\vh_i^v\}_{i=1}^{N}\right).
\end{equation}
This enables joint supervision of LDR-base and HDR renderings by LDR views and pseudo-HDR targets, respectively. Our base-gain representation thus models HDR radiance as a gain-based extension of LDR base rather than as a free color field. The reliable LDR views therefore anchor the base appearance and shared geometry, while 3D gain field supplies dedicated capacity for modeling the less reliable HDR extension, thereby promoting robustness to cross-view inconsistencies (see Figures \ref{fig:view_inconsist} and \ref{fig:apx_abl_direct_hdr}). Appendix \ref{subsec:apx_base_gain} provides further discussions.

\subsection{Prior-Constrained CRF Modeling \& Closed-loop Reconstruction} \label{subsec:crf_model}
\textbf{Prior-Constrained CRF Modeling}. Section \ref{sec:intro} motivated regularized CRF modeling in single-exposure HDR-NVS with a learnable prior. We derive this prior from DoRF \citep{EMoR}, a database of CRFs, by adopting the two-parameter monotonic approximation of its mean response proposed by \citet{HDRCNN}. As DoRF responses and the approximation are defined over normalized irradiance $x\in[0,1]$, we prepend a learnable soft normalization that preserves monotonicity and enhances representational capacity on DoRF responses (see Appendix~\ref{subsec:apx_crf_analysis} for proof and capacity analysis). Let $\vtheta=(\bm\phi,\bm\psi)$, $\bm\phi=(\sigma,n)$ and $\bm\psi=(a,b)$. Our CRF is defined as
\begin{equation} \label{eq:crf_model}
\begin{gathered}
    f_{\vtheta}(E)
    = t_{\bm\phi}\!\left(\eta_{\bm\psi}(E)\right), \\
    t_{\bm\phi}(x)
    = (1+\sigma)\frac{x^n}{x^n+\sigma},
    \qquad
    \eta_{\bm\psi}(E)
    = \operatorname{sigmoid}\!\left(a\ln(E+\epsilon)+b\right).
\end{gathered}
\end{equation}
Here $\sigma, n> 0$ parameterize the mean response approximation, while $a>0$ and $b \in \R$ govern the normalization. We learn all four parameters per scene to achieve scene-adaptability and share the learned CRF across RGB channels. Our CRF is strictly monotonic by construction, ensuring LDR intensity increases with irradiance, while DoRF-derived form of $t_{\bm\phi}$ promotes plausible responses.

\textbf{Closed-loop Reconstruction}. As shown in Figure~\ref{fig:framework}, we re-expose the HDR rendering $\widehat H^v$ at the training exposure $\Delta t$ using the learned CRF to obtain the re-exposed LDR view as
\begin{equation}
    \widehat L_{\mathrm{R}}^v(\Delta t)
    =
    f_{\vtheta}
    \left(
        \Delta t\,\widehat H^v
    \right).
\end{equation}
At test time, varying $\Delta t$ enables re-exposure at arbitrary exposures. By supervising the re-exposed view $\widehat L_{\mathrm{R}}^v(\Delta t)$ with the input LDR view, we close the reconstruction loop $\widehat L_{\mathrm{B}}^v \rightarrow \widehat H^v \rightarrow \widehat L_{\mathrm{R}}^v(\Delta t)$, where HDR reconstruction is anchored upstream to LDR base through 3D gain field and constrained downstream to recover the same LDR view after re-exposure. Consequently, HDR recovery is not only guided by LDR-to-HDR prior but also constrained to consistently reproduce input LDR views.


\textbf{Training Objective}. We jointly optimize reconstruction of LDR-base, HDR, and re-exposed LDR views. We follow prior works \citep{HDRGS, SeHDR} to use a common reconstruction loss that combines pixel-wise $\mathcal L_1$ loss with D-SSIM \citep{ssim} loss as
\begin{equation}
    D_\lambda(A, B) = \mathcal L_1(A, B) + \lambda\,\mathcal L_{\operatorname{D-SSIM}}(A, B),
\end{equation}

where $\lambda$ balances the two terms. The overall training objective is then formulated as
\begin{align} \label{eq:loss}
    \mathcal L 
    &= \mathcal L_{\mathrm{base}} + w_1\,\mathcal L_{\mathrm{HDR}} + w_2\,\mathcal L_{\mathrm{ReExp}}, \\
    &= D_\lambda(\widehat L_{\mathrm{B}}^v,L^v) + w_1\,D_\lambda\left(T_\mu(\widehat H^v),T_\mu(\widetilde H^v)\right) +  w_2\,D_\lambda\left(\widehat L_{\mathrm{R}}^v(\Delta t),L^v\right).
\end{align}
Here, $\widetilde H^v$ denotes the pseudo-HDR target, $w_1 = w_2 = 0.5$ control the effect of loss terms, and $T_\mu$ represents $\mu$-law range compressor \citep{kalantari2017} widely used in prior works for HDR benchmarking and loss optimization \citep{HDRGS, MonoHDR}. 

\section{Experiments}
\subsection{Experimental Settings} \label{sec:exp_settings}
\begin{table}[t]
\caption{Quantitative results of (\textit{top half}) re-exposed LDR-NVS on real scenes of HDR-NeRF and HDR-Plenoxels datasets and (\textit{bottom half}) HDR-NVS and re-exposed LDR-NVS on synthetic scenes of HDR-NeRF dataset. {\setlength{\fboxsep}{2pt}\colorbox{red!30}{red}} and {\setlength{\fboxsep}{2pt}\colorbox{yellow!30}{yellow}} denote the best and second-best results, respectively.} 
\label{tab:performance}
\begin{center}
\begin{tabular}{l|ccc|ccc}
\toprule
\multirow{2}{*}{Method}
& \multicolumn{3}{c|}{HDR-NeRF-Real -- LDR}
& \multicolumn{3}{c}{HDR-Plenoxels-Real -- LDR} \\
\cmidrule(lr){2-4} \cmidrule(lr){5-7}
& PSNR $\uparrow$ & SSIM $\uparrow$ & LPIPS $\downarrow$
& PSNR $\uparrow$ & SSIM $\uparrow$ & LPIPS $\downarrow$ \\
\midrule
HDR-GS
& 11.10 & 0.5212 & 0.370 & 4.70 & 0.2799 & 0.456 \\
GaussHDR
& \cellcolor{yellow!30}16.28 & \cellcolor{yellow!30}0.7849 & \cellcolor{yellow!30}0.176 & \cellcolor{yellow!30}17.73 & \cellcolor{yellow!30}0.7141 & \cellcolor{yellow!30}0.197 \\
Mono-HDR-GS
& 11.43 & 0.6627 & 0.301 & 9.79 & 0.5570 & 0.389 \\
SeHDR
& 13.75 & 0.6991 & 0.182 & 14.21 & 0.6644 & 0.232 \\
\textbf{PriorGS (Ours)}
& \cellcolor{red!30}24.31 & \cellcolor{red!30}0.9248 & \cellcolor{red!30}0.064 & \cellcolor{red!30}23.00 & \cellcolor{red!30}0.9021 & \cellcolor{red!30}0.092 \\

\midrule
\multirow{2}{*}{Method}
& \multicolumn{3}{c|}{HDR-NeRF-Syn -- HDR} & \multicolumn{3}{c}{HDR-NeRF-Syn -- LDR} \\
\cmidrule(lr){2-4} \cmidrule(lr){5-7}
& PSNR $\uparrow$ & SSIM $\uparrow$ & LPIPS $\downarrow$
& PSNR $\uparrow$ & SSIM $\uparrow$ & LPIPS $\downarrow$ \\
\midrule
HDR-GS
& 26.11 & \cellcolor{yellow!30}0.8939 & \cellcolor{yellow!30}0.113 & 10.86 & 0.3429 & 0.397 \\
GaussHDR
& \cellcolor{yellow!30}26.96 & 0.8828 & 0.114 & 11.99 & 0.5848 & 0.302  \\
Mono-HDR-GS
& 16.71 & 0.5642 & 0.359 & 12.44 & 0.5640 & 0.422 \\
SeHDR
& 15.14 & 0.6405 & 0.262 & \cellcolor{yellow!30}15.76 & \cellcolor{yellow!30}0.7450 & \cellcolor{yellow!30}0.169 \\
\textbf{PriorGS (Ours)}
& \cellcolor{red!30}31.22 & \cellcolor{red!30}0.9569 & \cellcolor{red!30}0.038 & \cellcolor{red!30}22.96 & \cellcolor{red!30}0.8937 & \cellcolor{red!30}0.081 \\
\bottomrule
\end{tabular}
\end{center}
\end{table}
\textbf{Datasets \& Implementation Details.} We evaluate PriorGS on two public datasets: HDR-NeRF \citep{HDR_Nerf} and HDR-Plenoxels \citep{hdr-plenoxels}. Each dataset contains 4 real scenes without HDR GT and captured with digital cameras. HDR-NeRF additionally provides 8 synthetic scenes created in Blender \citep{blender} with HDR GT. Note that HDR GTs are not involved in training of any method and are exclusively used for evaluation. In each scene, each view is available at 5 exposures $\{\Delta t_i\}^5_{i=1}$. We use a single-exposure training strategy, where we train all methods on one exposure and evalute re-exposed LDR-NVS at the remaining 4 exposures along with HDR-NVS. We repeat this protocol for all 5 training exposures and report the average results. We use 18 training and 17 test views per HDR-NeRF scene and 27 training and 13 test views per HDR-Plenoxels scene. We initialize the scenes using COLMAP \citep{colmap} and train them for 30K iterations on an RTX 3090 GPU. We select IntrinsicHDR \citep{intrinsic_hdr} as our 2D HDR reconstruction method for its robustness. Appendix \ref{subsec:apx_exp_details} provides further details.

\textbf{Metrics \& Baselines}. We evaluate HDR-NVS and re-exposed LDR-NVS using PSNR, SSIM \citep{ssim}, and LPIPS \citep{lpips}. In HDR quantitative comparisons, we follow \citet{sihdr-caveat} and apply luminance correction once per scene to adjust the scale ambiguity inherent in single-exposure HDR reconstruction \citep{Cao_2024_CVPR}. Next, following \citet{HDRGS}, we apply $\mu$-law range compression and then compute HDR metrics. We omit both steps for HDR qualitative comparisons and visualize HDR images after only a fixed tone-mapping. We compare PriorGS with all existing single-exposure methods Mono-HDR-GS \citep{MonoHDR} and SeHDR \citep{SeHDR}, and multi-exposure methods GaussHDR \citep{GaussHDR} and HDR-GS \citep{HDRGS} adapted to single-exposure setting. For fairness to other methods, we use GaussHDR’s 3DGS variant \citep{GaussianSplats} and assess its Scaffold-GS \citep{scaffold-GS} variant in Appendix \ref{subsec:apx_quant_results}. As SeHDR is closed-source, we reproduce it to our best ability. Appendix \ref{subsec:apx_exp_details} offers further details.

\subsection{Performance Comparison \& Application} \label{subsec:perf_application}
\textbf{Quantitative.} Table \ref{tab:performance} summarizes the quantitative results. PriorGS consistently outperforms all baselines by large margins across both datasets, real and synthetic scenes, and both HDR-NVS and re-exposed LDR-NVS. This shows the effectiveness of our LDR-to-HDR prior integration with closed-loop reconstruction, base–gain representation, and regularized CRF modeling. Appendix \ref{subsec:apx_quant_results} offers more results, including PriorGS’s robustness to the choice of 2D HDR reconstruction method.

\textbf{Qualitative.} Figure \ref{fig:hdr_qualitative} presents qualitative single-exposure HDR-NVS results from low-, medium-, and high-exposure inputs across real and synthetic scenes. Without learned LDR-to-HDR reconstruction priors, all baselines fail to recover scene radiance faithfully. However, PriorGS leverages this prior in a closed-loop reconstruction framework to produce substantially more accurate HDR reconstructions across all scenes and exposure settings. Moreover, in Figure ~\ref{fig:ldr-qualitative}, we present qualitative re-exposed LDR-NVS results across real and synthetic scenes under diverse input-output exposure setups. As observed, the flexible MLP-based CRFs of GaussHDR and Mono-HDR-GS generalize poorly to unseen exposures, producing overexposed or underexposed renderings and color artifacts. SeHDR mitigates this by heuristically fixing a CRF per scene but still suffers from inaccurate CRF estimation and limited scene adaptability. PriorGS instead regularizes CRF modeling with a learnable prior that guarantees monotonicity and promotes plausible responses, thereby achieving much more accurate re-exposure at unseen exposures and consistently outperforming all baselines. More qualitative results are offered in Appendix \ref{subsec:apx_qual_results}.
\begin{figure}[t]
    \begin{center}
    \includegraphics[width=0.95\linewidth]{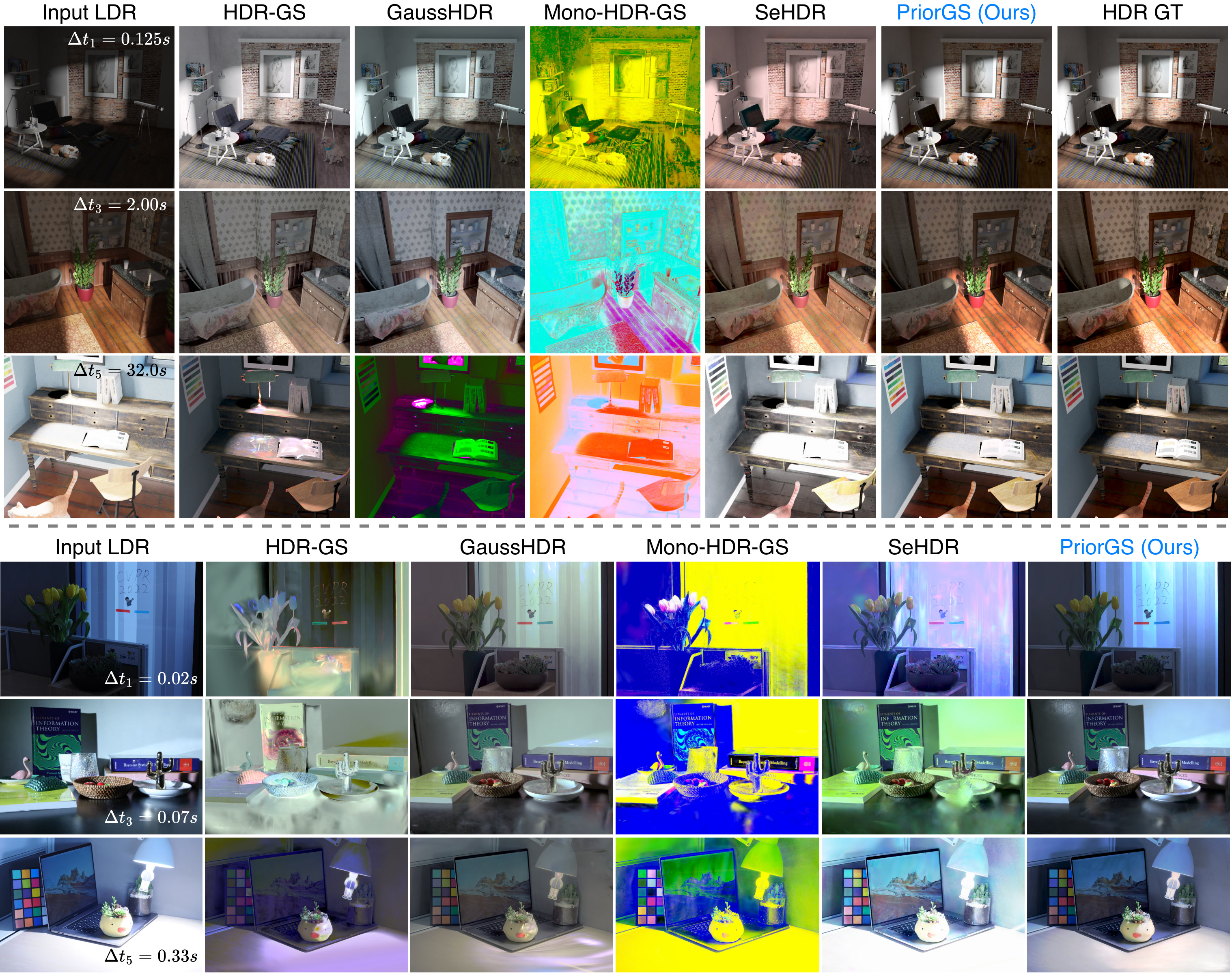}
    \end{center}
    \caption{Qualitative HDR-NVS results given low- ($\Delta t_1$), medium- ($\Delta t_3$), and high-exposure ($\Delta t_5$) inputs across (\textit{top half}) synthetic and (\textit{bottom half}) real scenes. PriorGS substantially outperforms all baselines in high-fidelity HDR reconstruction across all scenes and exposure settings.}
    \label{fig:hdr_qualitative}
\end{figure}
\begin{figure}[h]
    \begin{center}
    \includegraphics[width=0.95\linewidth]{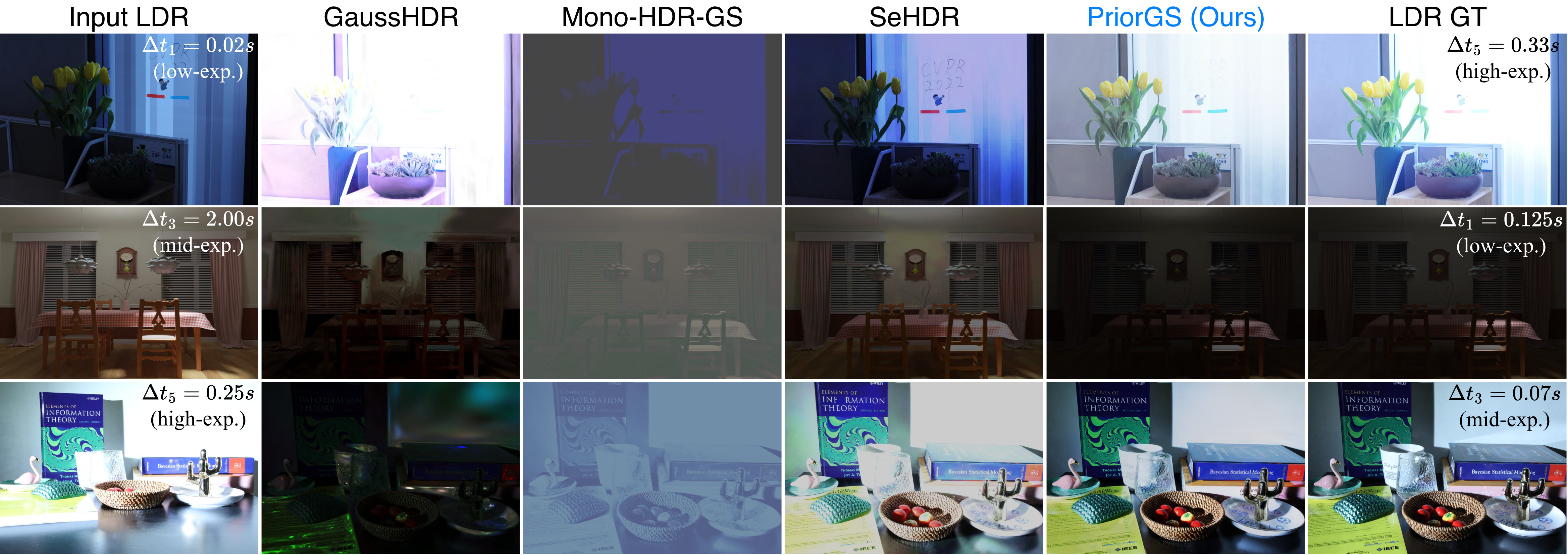}
    \end{center}
    \caption{Qualitative re-exposed LDR-NVS results across scenes for three input $\rightarrow$ output exposure setups: (\textit{top row}) $\Delta t_1 \rightarrow \Delta t_5$, (\textit{middle row}) $\Delta t_3 \rightarrow \Delta t_1$, (\textit{bottom row}) $\Delta t_5 \rightarrow \Delta t_3$. PriorGS consistently outperforms all baselines in novel-view LDR re-exposure at unseen exposures.} 
    \label{fig:ldr-qualitative}
\end{figure}

\textbf{3D HDR Appearance Editing.} Beyond performance improvement, our base-gain HDR representation enables continuous post-training control over HDR appearance in 3D without altering the LDR base scene. We introduce a coefficient $\rho \in \R_{\geq0}$ that scales the 3D gain field: each Gaussian’s log-gain SH coefficients $\bm{\beta}_i^g$ are scaled to $\rho\bm{\beta}_i^g$. This is equivalent to scaling evaluated log-gain $\vg_i$ to $\rho\vg_i$ in Equation~\ref{eq:gain_eq} as SH evaluation is linear in SH coefficients. Varying $\rho$ then controls HDR radiance enhancement relative to the LDR base: $\rho=0$ removes the gain and recovers the LDR base, $0<\rho<1$ yields intermediate enhancement, $\rho=1$ recovers the full HDR radiance, and $\rho>1$ extrapolates the enhancement for creative HDR appearance editing. In Figure \ref{fig:hdr-edit}, we vary $\rho$ from $0.0$ to $1.5$, report the corresponding dynamic range (DR), and show separate tone-mapped and low-exposure views of a novel HDR view. We observe that increasing $\rho$ expands the DR and enhances the HDR appearance and chromatic contrast. Notably, $\rho=0.5$ recovers highlight texture missing at $\rho=0$. Furthermore, as shown in Figure \ref{fig:hdr-edit}, the edits are inherently view-consistent as $\rho$ operates directly on 3D Gaussians, offering a key advantage over independent per-view 2D editing. Note that, without a 3D gain field, the same editing capability cannot be achieved by scaling Gaussians' HDR radiance, as this merely amounts to an exposure shift and does not expand the DR (see Figure \ref{fig:hdr-hist}). Our base–gain representation thus offers a unique mechanism for 3D HDR appearance editing while preserving the LDR-base scene. Appendix \ref{subsec:apx_application} offers extension to local edits and more scenes.
\begin{figure}[t]
    \begin{center}
    \includegraphics[width=0.95\linewidth]{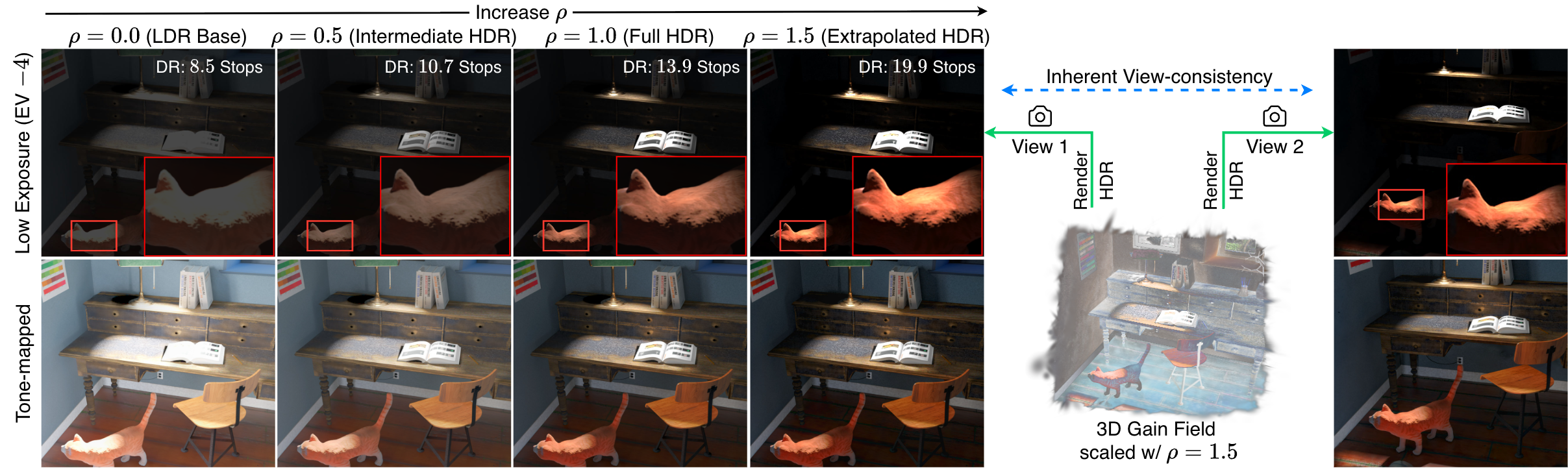}
    \end{center}
    \caption{View-consistent 3D HDR appearance editing via gain field scaling with $\rho$. We sweep $\rho$ and show an HDR view at a low-exposure (\textit{top row}) and after a fixed tone-mapping (\textit{bottom row}). Increasing $\rho$ expands dynamic range (DR) and enhances HDR appearance and chromatic contrast.}
    \label{fig:hdr-edit}
\end{figure}

\subsection{Ablation Studies}

We ablate PriorGS and report qualitative results using all datasets and quantitative results on real HDR-NeRF and HDR-Plenoxels scenes, whose challenging scenes and diverse capture settings better reveal the contribution of components. Appendix \ref{subsec:apx_abl_study} offers further ablation results and details.

\textbf{LDR-to-HDR Prior \& Closed-Loop Reconstruction.} In Table \ref{tab:ablation}, we ablate (a) LDR-to-HDR reconstruction prior by removing pseudo-HDR supervision; and (b) closed-loop reconstruction by blocking gradients from the re-exposure loss to HDR representation, such that this loss optimizes only the CRF and does not impact HDR recovery. Both ablations degrade the performance, with removal of LDR-to-HDR prior causing a substantial drop. Together with qualitative results in Figures \ref{fig:main-abl} and \ref{fig:apx_abl_hdr_rec}, these results prove the necessity of learned LDR-to-HDR priors for faithful HDR-NVS from single-exposure views and the contribution of closed-loop reconstruction to HDR-NVS quality.

\textbf{Base-Gain HDR Representation.} As shown in Table \ref{tab:ablation}, we ablate our base-gain HDR representation using (c) direct HDR encoding with equal capacity (bigger SH degree); and MLP replacements for the gain field that predict Gaussians' HDR color from either (d) base color $\vc$, position $\vmu$, and view direction $\vd$ or (e) base color $\vc$ alone. All ablations degrade the overall performance, with direct encoding causing the largest drop. Qualitatively, Figures \ref{fig:main-abl} and \ref{fig:apx_abl_direct_hdr} show that direct HDR encoding exhibits severe color and geometry artifacts, while Figures \ref{fig:main-abl} and \ref{fig:apx_abl_gain_to_mlp} reveal failure of MLPs in reconstructing high-frequency textures in challenging regions. Also, all configurations preclude 3D HDR editing enabled by the gain field. These results show the benefits of our base-gain representation.

\textbf{Prior-Constrained CRF Modeling.} In Table \ref{tab:ablation}, we replace our CRF $f_\vtheta$ with (f) an MLP applied in 3D on log-exposed per-Gaussian HDR colors; (g) an MLP applied in 2D on log-exposed HDR renderings; and (h) fixed gamma encoding $\gamma=2.2$. Our prior-constrained CRF consistently outperforms flexible MLPs and the fixed CRF, validating the benefit of regularizing CRF with a learnable prior in single-exposure HDR-NVS. Appendix \ref{subsec:apx_abl_study} provides qualitative comparisons.

\begin{table}[t]
\caption{Ablations on re-exposed LDR-NVS on real scenes of HDR-NeRF and HDR-Plenoxels, with best and second-best results denoted by {\setlength{\fboxsep}{2pt}\colorbox{red!30}{red}} and  {\setlength{\fboxsep}{2pt}\colorbox{yellow!30}{yellow}}. $x\rightarrow y$ denotes replacing $x$ with $y$.}
\label{tab:ablation}
\begin{center}
\setlength{\tabcolsep}{5pt}
\begin{tabular}{@{}l|ccc|ccc}
\toprule
\multirow{2}{*}{(ID) Description}
& \multicolumn{3}{c|}{HDR-NeRF-Real -- LDR}
& \multicolumn{3}{c}{HDR-Plenoxels-Real -- LDR} \\
\cmidrule(lr){2-4} \cmidrule(lr){5-7}
& PSNR$\uparrow$ & SSIM$\uparrow$ & LPIPS$\downarrow$
& PSNR$\uparrow$ & SSIM$\uparrow$ & LPIPS$\downarrow$ \\
\midrule
(a) Remove LDR-to-HDR prior
& 18.39 & 0.8503 & 0.099 & 15.00 & 0.7881 & 0.141 \\
(b) Remove reconstruction loop
& 23.52 & 0.9180 & 0.073 & \cellcolor{yellow!30}22.97 & \cellcolor{yellow!30}0.9010 & 0.099 \\
\midrule
(c) Base-Gain $\rightarrow$ direct encoding
& 22.71 & 0.8981 & 0.105 & 15.05 & 0.6860 & 0.251 \\
(d) Gain Field $\rightarrow$ $\operatorname{MLP}(\vc, \vmu, \vd)$
& \cellcolor{red!30}24.44 & 0.9238 & \cellcolor{yellow!30}0.066 & 22.64 & 0.8979 & \cellcolor{yellow!30}0.093 \\
(e) Gain Field $\rightarrow$ $\operatorname{MLP}(\vc)$
& 24.28 & \cellcolor{red!30}0.9268 & \cellcolor{yellow!30}0.066 & 22.07 & 0.8732 & 0.111 \\
\midrule
(f) CRF $f_\vtheta$ $\rightarrow$ $\operatorname{MLP}$ in 3D
& 23.26 & 0.8874 & 0.084 & 20.35 & 0.8296 & 0.147 \\
(g) CRF $f_\vtheta$ $\rightarrow$ $\operatorname{MLP}$ in 2D
& 23.53 & 0.9071 & 0.075 & 22.34 & 0.8802 & 0.126 \\
(h) CRF $f_\vtheta$ $\rightarrow$ fixed gamma
& 14.87 & 0.6784 & 0.222 & 12.57 & 0.5031 & 0.320 \\
\midrule
\textbf{PriorGS (Ours)}
& \cellcolor{yellow!30}24.31 & \cellcolor{yellow!30}0.9248 & \cellcolor{red!30}0.064 & \cellcolor{red!30}23.00 & \cellcolor{red!30}0.9021 & \cellcolor{red!30}0.092 \\
\bottomrule
\end{tabular}
\end{center}
\end{table}
\begin{figure}[t]
    \begin{center}
    \includegraphics[width=0.95\linewidth]{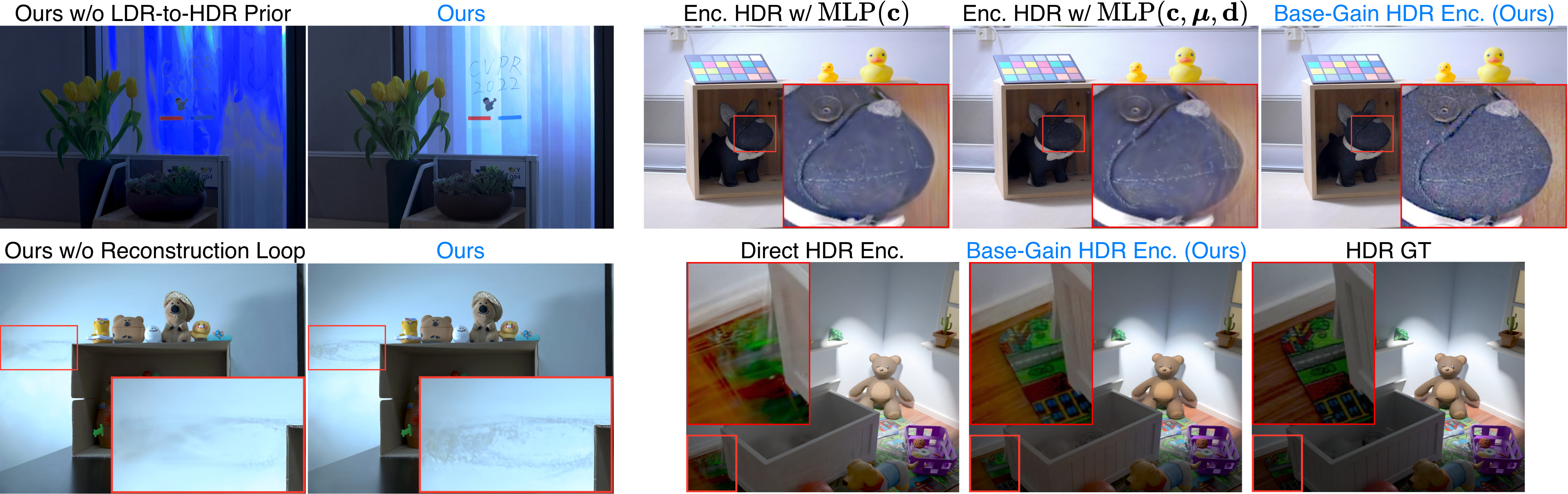}
    \end{center}
    \caption{Ablations on HDR views for (\textit{top left}) removing LDR-to-HDR prior; (\textit{bottom left}) removing reconstruction loop; and replacing our base-gain HDR encoding by (\textit{top right}) two MLP variants or (\textit{bottom right}) direct HDR encoding. All ablations degrade HDR reconstruction quality.}
    \label{fig:main-abl}
\end{figure}
\section{Discussion \& Conclusion}
We present a novel single-exposure HDR-NVS method based on 3DGS that leverages LDR-to-HDR reconstruction priors within a closed reconstruction loop, encodes HDR by a novel 3D base-gain representation, and regularizes CRF modeling with a learnable prior to achieve high-fidelity HDR-NVS and LDR re-exposure. We prove superiority of our method with extensive experiments and present its application in 3D HDR appearance editing. A few limitations present room for improvement. Our method is limited by the quality of pseudo-HDR targets generated by 2D HDR reconstruction methods: poor recovery of lost textures in the targets can degrade our HDR reconstruction. Also, while our base-gain representation improves robustness to cross-view inconsistencies in the targets, major discrepancies may still be encoded as fake view-dependent appearance, reducing scene realism. Future directions thus include (1) generative 3D HDR reconstruction that infers radiance directly in 3D using neighboring-view cues to improve reconstruction and cross-view consistency; and (2) mitigating inconsistency issue by uncertainty learning \citep{nerf-in-the-wild}.

\subsection*{AI use statement}
In this work, we used generative AI tools to implement methods, and assist in the writing of proofs. We have not used generative AI tools to generate synthetic data sets, help develop theoretical models or conceptual frameworks, formulate mathematical claims, provide critical ingredients for proving mathematical claims, propose or refine hypotheses, design or provide feedback on research methodology or experiments, assist with translation, clean and reformat dataset, support qualitative and thematic data analysis, interpret results, draft parts of the paper, and create or modify scientific figures or images. 
Additionally, we used generative AI tools for polishing the writing and enhancing readability, sourcing/searching for information, and brainstorming. We have reviewed all AI-assisted work in the following forms: (1) LLM-generated code was verified and tested for correctness by at least one author; (2) All LLM-generated text used for polishing the writing of the paper and the monotonicity proof was reviewed by at least two authors and manually refined repeatedly to ensure the intended message is delivered; and (3) All LLM-driven search results were manually reviewed to ensure factual accuracy. We take responsibility for the final content of this work, including text, claims or artifacts produced with the aid of generative AI.

\subsection*{Reproducibility statement}
Sections \ref{sec:prop_method} and \ref{sec:exp_settings}, together with Appendix \ref{subsec:apx_exp_details}, provide the methodological and implementation details needed to reproduce our results. Our implementation builds on the publicly available HDR-GS implementation \citep{HDRGS}. We use the publicly available HDR-NeRF \citep{HDR_Nerf} and HDR-Plenoxels \citep{hdr-plenoxels} datasets for training and evaluation. Pseudo-HDR targets are generated from their input LDR views using the publicly available 2D HDR reconstruction method IntrinsicHDR \citep{intrinsic_hdr}. Our implementation will be made publicly available on GitHub upon publication. All experiments can be reproduced using a single NVIDIA RTX 3090 GPU with 24GB of memory.


\subsubsection*{Acknowledgments}
This work was supported in part by Natural Sciences and Engineering Research Council of Canda under Discovery Grant (RGPIN-2024-05154) and Canada Research Chair Program (CRC-2025-00266).

\bibliography{iclr2027_conference}
\bibliographystyle{iclr2027_conference}

\clearpage

\appendix

\section{Appendix}
\subsection{Discussion on Base-Gain HDR Representation} \label{subsec:apx_base_gain}
Our base-gain HDR representation anchors the base appearance and shared geometry to reliable LDR supervision and directs cross-view inconsistencies in pseudo-HDR targets toward the gain field. This provides a soft inductive bias that discourages, but does not strictly prevent, HDR supervision from updating the base appearance and geometry. We made this design choice after evaluating the following hard constraints:
\begin{itemize}
    \item \textbf{Preventing HDR supervision from updating geometry.} In regions where LDR views are degraded by shadow noise or highlight clipping, pseudo-HDR targets may contain recovered textures. 3D reconstruction of these textures may require geometric refinement beyond what LDR supervision alone can provide. We therefore allow HDR supervision to update the shared geometry, a choice motivated by the above rationale and our empirical observations.
    \item \textbf{Preventing HDR supervision from updating base appearance.} We evaluated this constraint by implementing stop gradient on linearized LDR-base color $\vl_i^v$ in Equation \ref{eq:gain_eq} and achieved no improvement in HDR reconstruction quality. We therefore favor the simpler design, avoiding an additional constraint that provides no observed benefit.
\end{itemize}

\subsection{Representational Capacity Analysis \& Monotonicity Proof of Our CRF} 
\label{subsec:apx_crf_analysis}
\subsubsection{Representational Capacity Analysis}
We evaluate the representational capacity of our CRF model (see Section \ref{subsec:crf_model}) on all 201 camera response curves in DoRF \citep{EMoR}. Specifically, we follow DoRF's range-normalization convention to fit our CRF's normalized response to each DoRF curve by minimizing the mean squared error (MSE).
Figure \ref{fig:crf_cover} compares the DoRF response curves with the resulting fitted responses. Qualitatively, the fitted responses closely reproduce the broad diversity of DoRF curves, yielding an overall response family whose distribution and envelope are nearly indistinguishable from those of the DoRF curves. Quantitatively, across all curves, our CRF model achieves a mean MSE of $2.081\times10^{-5}$, a median MSE of $4.328\times10^{-6}$, and a maximum MSE of $8.225\times10^{-4}$. For comparison, fitting the direct two-parameter response $t_{\bm\phi}(x)$ (see Equation~\ref{eq:crf_model}) and a gamma response $x^\gamma$ yields mean MSEs of $5.717\times10^{-5}$ and $3.801\times10^{-3}$, respectively. Thus, our model reduces the mean MSE by $63.6\%$ relative to the two-parameter response and by $99.45\%$ relative to the gamma model. This demonstrates that our model represents empirical camera responses substantially more accurately than the baselines.
\begin{figure}[h]
    \centering

    \begin{subfigure}[b]{0.48\linewidth}
        \centering
        \includegraphics[width=\linewidth]{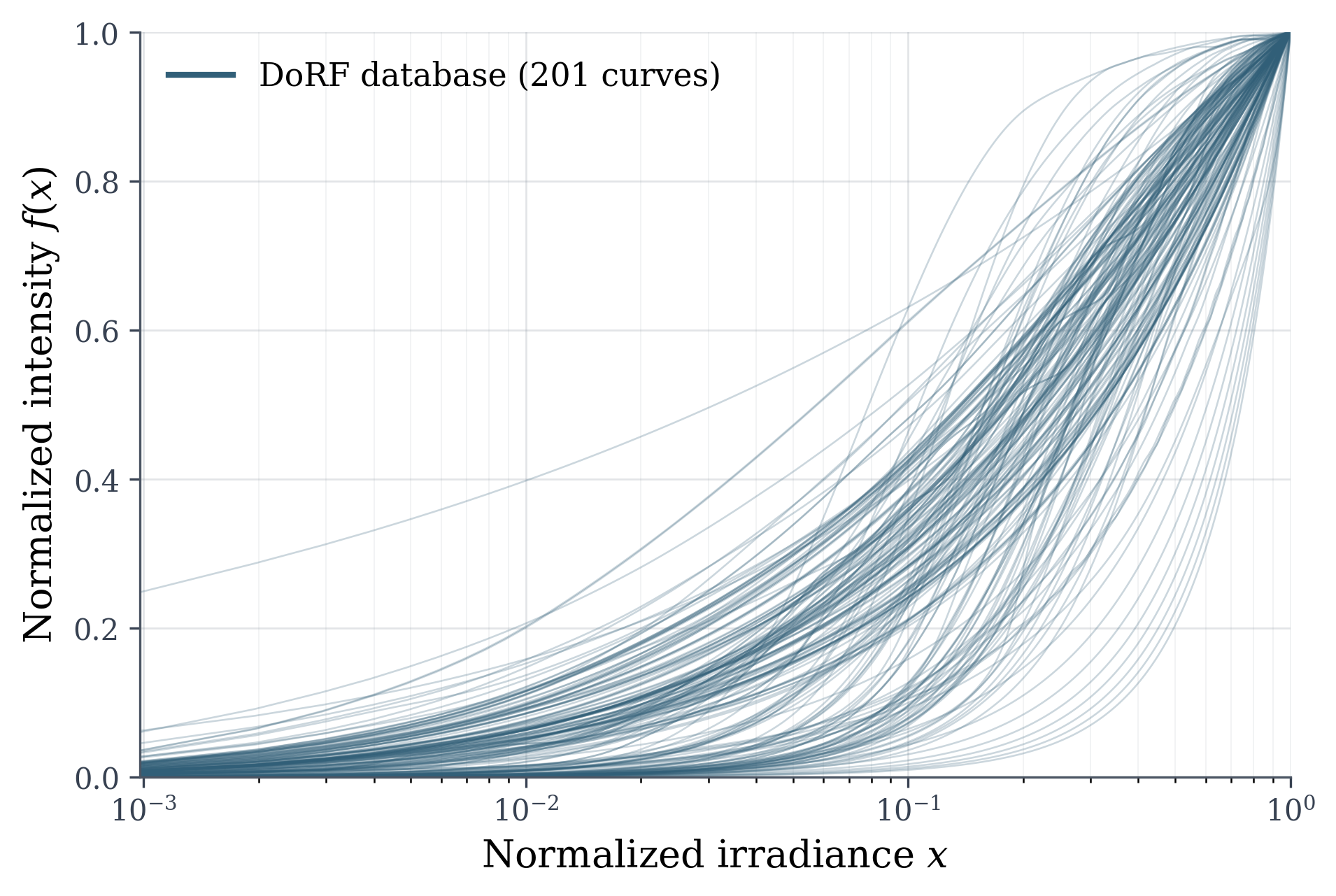}
        \caption{DoRF responses}
        \label{fig:dorf_cover}
    \end{subfigure}
    \hfill
    \begin{subfigure}[b]{0.48\linewidth}
        \centering
        \includegraphics[width=\linewidth]{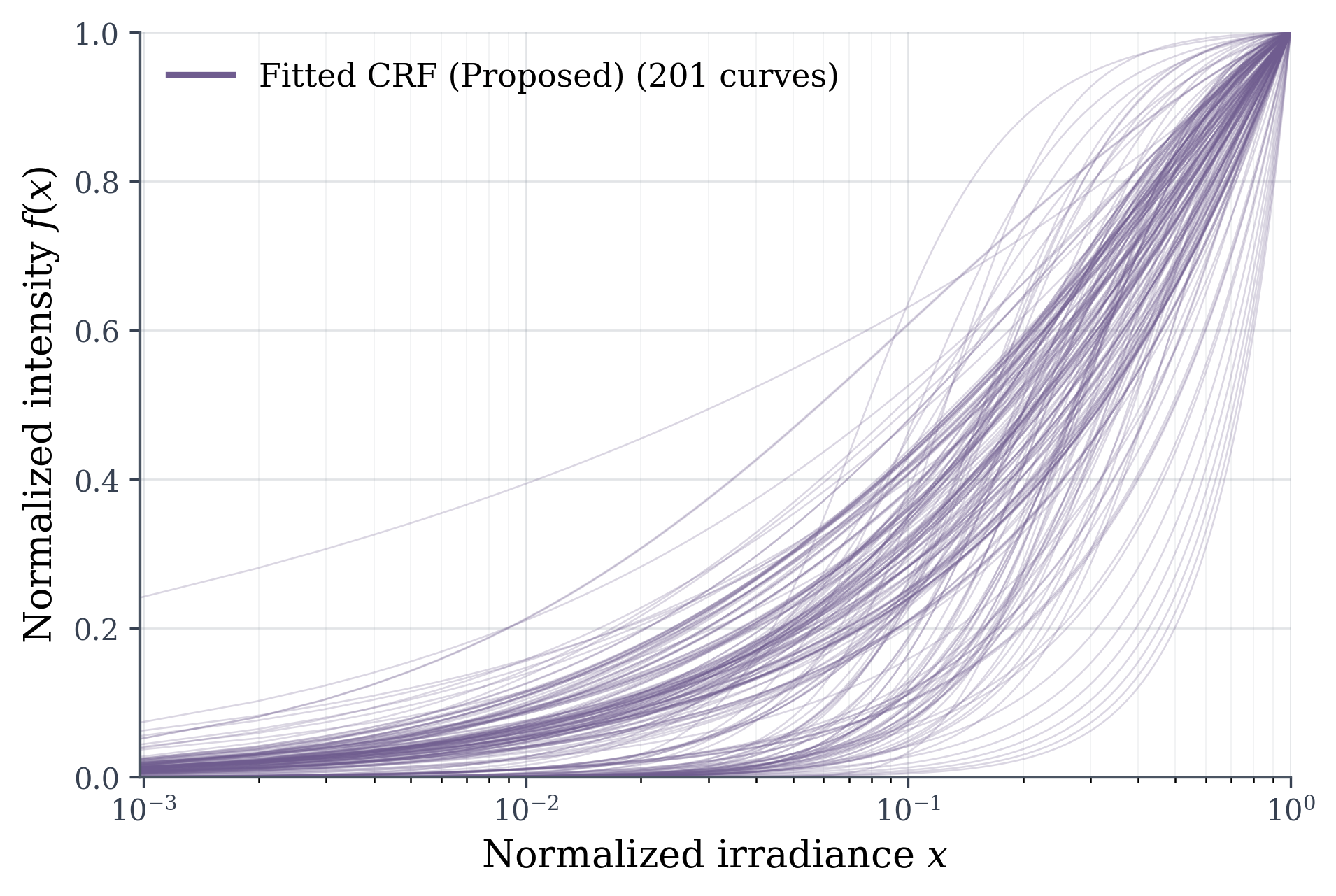}
        \caption{Best-fitting responses of our CRF}
        \label{fig:our_cover}
    \end{subfigure}

    \caption{Representational capacity of our proposed CRF model on the DoRF database. (a) the 201 DoRF response curves. (b) the independently fitted responses of our CRF. All curves are displayed using a logarithmic irradiance axis; the zero-irradiance samples are included during fitting but omitted from the visualization.}
    \label{fig:crf_cover}
\end{figure}

\subsubsection{Monotonicity Proof}
As characterized by \citet{EMoR}, a physically valid CRF should preserve the ordering of input irradiance: a greater irradiance must produce a bigger recorded pixel intensity. We show below that our proposed CRF satisfies this: It is strictly increasing over its domain by construction. Our proposed CRF is defined as
\begin{equation}
\begin{gathered}
    f(E)
    = t\left(\eta(E)\right), \\
    t(x)
    = (1+\sigma)\frac{x^n}{x^n+\sigma},
    \qquad
    \eta(E)
    = \operatorname{sigmoid}\!\left(a\ln(E+\epsilon)+b\right).
\end{gathered}
\end{equation}
where $E\in\mathbb{R}_{\geq 0}$,
$\sigma,n,a\in\mathbb{R}_{>0}$, $b\in\mathbb{R}$, and $\epsilon = 1\times10^{-8}$.

\begin{lemma}[Composition of strictly increasing functions]
\label{lem:monotone_composition}
Let $u\colon I\to J$ and $v\colon J\to\mathbb{R}$ be strictly
increasing. Then $v\circ u$ is strictly increasing.
\end{lemma}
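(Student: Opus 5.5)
The plan is to argue directly from the definition of strict monotonicity, with no earlier results needed. Fix arbitrary $x_1,x_2\in I$ with $x_1<x_2$. The goal is to show $(v\circ u)(x_1)<(v\circ u)(x_2)$, that is, $v(u(x_1))<v(u(x_2))$.

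\textbf{Step 1.} Apply strict monotonicity of $u$ on $I$ to get $u(x_1)<u(x_2)$. Since $u$ maps into $J$, both $u(x_1)$ and $u(x_2)$ lie in $J$, which is the domain of $v$.

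\textbf{Step 2.} Apply strict monotonicity of $v$ on $J$ to the pair $u(x_1)<u(x_2)$. This gives $v(u(x_1))<v(u(x_2))$.

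Since $x_1<x_2$ were arbitrary, $v\circ u$ is strictly increasing on $I$.

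There is no genuine obstacle here. The only point needing care is checking that the intermediate values $u(x_i)$ lie in $J$, where $v$'s monotonicity applies; this holds by the stated codomain $u\colon I\to J$. In the subsequent use, this matters because $\eta$ maps $\mathbb{R}_{\geq0}$ into $(0,1)$. The lemma will then be applied with $u=\eta$ on $I=\mathbb{R}_{\geq0}$ and $J=(0,1)$, and with $v=t$ restricted to $(0,1)$. It may also be applied twice, to decompose $\eta$ itself into $E\mapsto a\ln(E+\epsilon)+b$ followed by the sigmoid.
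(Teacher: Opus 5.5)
Your proof is correct and follows the paper's argument step for step. You fix $x_1<x_2$ in $I$, use strict monotonicity of $u$ to get $u(x_1)<u(x_2)$ with both values in $J$, and then apply strict monotonicity of $v$; your remarks on how the lemma is applied (to $\operatorname{sigmoid}\circ h$ and then to $t\circ\eta$ with $J=(0,1)$) also match the paper's use in the proposition.
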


\begin{proof}
Let $x_1,x_2\in I$ with $x_1<x_2$. Since $u$ is strictly increasing,
\[
u(x_1)<u(x_2).
\]
Because $u(x_1),u(x_2)\in J$ and $v$ is strictly increasing, it follows that
\[
v(u(x_1))<v(u(x_2)).
\]
Thus,
\[
(v\circ u)(x_1)<(v\circ u)(x_2),
\]
so $v\circ u$ is strictly increasing on $I$.
\end{proof}

\begin{proposition}
\label{prop:crf_strict_monotonicity}
Under the parameter constraints above, $f$ is strictly
increasing on $\mathbb{R}_{\geq 0}$.
\end{proposition}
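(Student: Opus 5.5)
The plan is to write $f=t\circ\eta$ as a chain of elementary strictly increasing maps and apply Lemma~\ref{lem:monotone_composition} repeatedly. For the inner map $\eta$ we write $\eta = s\circ\ell$, where $\ell(E)=a\ln(E+\epsilon)+b$ and $s=\operatorname{sigmoid}$.

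\textbf{Monotonicity of $\ell$.} On $\mathbb{R}_{\geq 0}$ we have $E+\epsilon\geq\epsilon>0$, so $\ell$ is well defined. It is the composition of $E\mapsto E+\epsilon$, then $\ln$ on $(0,\infty)$, then the affine map $y\mapsto ay+b$. Each of these is strictly increasing, the last because $a>0$.

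\textbf{Monotonicity of $s$.} The sigmoid $s(y)=1/(1+e^{-y})$ satisfies $s'(y)=s(y)(1-s(y))>0$ on $\mathbb{R}$. It therefore maps $\mathbb{R}$ strictly increasingly into $(0,1)$, so $\eta$ takes values in $J=(0,1)$.

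\textbf{Monotonicity of $t$.} It suffices to treat $t$ on $(0,1)$, though the argument works on all of $(0,\infty)$. Write
\[
t(x)=(1+\sigma)\left(1-\frac{\sigma}{x^n+\sigma}\right).
\]
Since $n>0$, the map $x\mapsto x^n$ is strictly increasing on $(0,\infty)$. Because $\sigma>0$, the quantity $x^n+\sigma$ is positive and increasing. Hence $\sigma/(x^n+\sigma)$ is strictly decreasing, and multiplying by $1+\sigma>0$ preserves strict increase of $1-\sigma/(x^n+\sigma)$. As a check, differentiating directly gives
\[
t'(x)=(1+\sigma)\,\frac{\sigma n x^{n-1}}{(x^n+\sigma)^2}>0 \quad\text{for } x>0.
\]

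\textbf{Conclusion.} Chaining Lemma~\ref{lem:monotone_composition} first gives that $\eta=s\circ\ell$ is strictly increasing from $\mathbb{R}_{\geq 0}$ into $(0,1)$. A second application then gives that $f=t\circ\eta$ is strictly increasing on $\mathbb{R}_{\geq 0}$.

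The only real point of care is matching domains. The argument of $\ln$ must be positive, which $\epsilon>0$ ensures. The argument of $t$ must avoid $x=0$ when $n<1$, where $x^{n-1}$ blows up; the sigmoid range $(0,1)$ ensures this. Using the monotone-reformulation argument for $t$ rather than the derivative sidesteps this issue entirely.
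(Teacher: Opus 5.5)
Your proof is correct and follows the paper's argument. Like the paper, you split $f$ into $E\mapsto a\ln(E+\epsilon)+b$, the sigmoid, and $t$, show each piece is strictly increasing, chain Lemma~\ref{lem:monotone_composition}, and use the sigmoid's range $(0,1)$ to keep $t$ away from $x=0$. The only difference is cosmetic: you certify monotonicity of $\ell$ and $t$ by composing elementary monotone maps (via $t(x)=(1+\sigma)\bigl(1-\sigma/(x^n+\sigma)\bigr)$), whereas the paper uses the positive derivatives $h'(E)=a/(E+\epsilon)$ and $t'(x)>0$.
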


\begin{proof}
Define $h(E)=a\ln(E+\epsilon)+b.$
Since $E+\epsilon>0$ and $a>0$,
\begin{equation}
h'(E)=\frac{a}{E+\epsilon}>0,
\end{equation}
so $h$ is strictly increasing on $\mathbb{R}_{\geq 0}$. Now, since sigmoid has range $(0, 1)$,
\begin{equation}
\operatorname{sigmoid}'(t)
=\operatorname{sigmoid}(t)\bigl(1-\operatorname{sigmoid}(t)\bigr)>0.
\end{equation}
Therefore, sigmoid is also strictly increasing, and thus 
$\eta=\operatorname{sigmoid}\circ h$ is strictly increasing due to Lemma \ref{lem:monotone_composition} and has range $(0,1)$. Finally, for every $x>0$,
\begin{equation}
t'(x)
=\frac{(1+\sigma)n\sigma x^{n-1}}
       {(x^n+\sigma)^2}>0,
\end{equation}
since $\sigma,n>0$. Hence $t$ is strictly increasing on
$(0,\infty)$. Since $\eta(E)\in(0,1)$,
$f=t\circ\eta$ is a composition of
strictly increasing functions. Lemma~\ref{lem:monotone_composition}
thus shows that $f$ is strictly increasing on
$\mathbb{R}_{\geq 0}$.
\end{proof}

\subsection{Experimental Setup Details} \label{subsec:apx_exp_details}
\textbf{Details of Datasets.} The real and synthetic HDR-NeRF \citep{HDR_Nerf} views have native resolutions of $3216\times2136$ and $800\times800$, respectively. We train and evaluate in the native resolution for synthetic scenes and downsample real scenes to $1608\times1068$ ($\frac{1}{2}$ scale) to reduce computational cost. Similarly, we downsample the HDR-Plenoxels \citep{hdr-plenoxels} views from their native resolution of $5952\times4480$ to $\frac{1}{4}$ scale, resulting in a resolution of $1488\times1120$ for training and evaluation. In the HDR-Plenoxels dataset, the \textit{plant} and \textit{desk} scenes are available at all 5 exposures, $\{\Delta t_1,\Delta t_2,\Delta t_3,\Delta t_4,\Delta t_5\}$, whereas \textit{coffee} and \textit{character} scenes provide only 3 exposures $\{\Delta t_1,\Delta t_3,\Delta t_5\}$. Accordingly, we conduct five single-exposure training runs for \textit{plant} and \textit{desk} but only three for \textit{coffee} and \textit{character}. Reported results are averaged across runs for all training exposures and all scenes of a dataset.

\textbf{Further Implementation Details.} The Adam optimizer \citep{adam} is used for training. We adopt the learning-rate schedules and densification/pruning hyperparameters of \citet{GaussianSplats} for training the Gaussian scene representations and use the same configuration across all baselines. We initialize our CRF model $f_\vtheta$ to the mean DoRF response \citep{EMoR, HDRCNN} and optimize its parameters with an initial learning rate of $5\times10^{-4}$ that exponentially decays to $5\times10^{-5}$. To ensure that CRF parameters $\sigma$, $n$, and $a$ remain strictly positive throughout training, we use exponential reparameterizations: $\sigma = \exp(\tilde \sigma)$, $n = \exp(\tilde n)$, $a = \exp(\tilde a)$ and optimize the unconstrained log-parameters $\tilde\sigma,\tilde n,\tilde a \in \R$ jointly with $b$. We note that our CRF model $f_\vtheta$ learns a single response curve shared across all three RGB channels. To improve optimization stability, we normalize pseudo-HDR targets to $[0, 1]$ by dividing them by the maximum radiance across all pseudo-HDR targets in the scene.

\textbf{Further Details of Metrics \& Baselines.} As discussed in Section \ref{sec:exp_settings}, in HDR quantitative comparisons, we follow the recommendation of \citet{sihdr-caveat} and apply luminance correction to adjust the scale ambiguity inherent in single-exposure HDR reconstruction \citep{Cao_2024_CVPR} and enhance judgment of HDR metrics. To adapt their proposed correction to 3D settings, we compute luminance correction weights once per scene using its first indexed viewpoint and then reuse the weights across all evaluation viewpoints. We note that unlike per-view correction, this per-scene correction avoids masking of intensity flickering among views during evaluation. After luminance correction, we apply $\mu$-law range compression ($\mu=5000$) and compute HDR metrics in this tone-mapped domain, following prior works \citep{HDRGS, GaussHDR}. For comparisons with multi-exposure baselines, we adapt the publicly available implementations of GaussHDR \citep{GaussHDR} and HDR-GS \citep{HDRGS} to run on single-exposure inputs. The recent 3DGS-based multi-exposure methods PhysHDR-GS \citep{PhysHDR} and Expo-GS \citep{expo-gs} are not included only because their implementations are not publicly available. All baselines, whether single-exposure or multi-exposure, were retrained given our HDR GT-free single-exposure training protocol (as explained in Section \ref{sec:exp_settings}) to ensure fairness in comparisons.

\begin{table}[t]
\caption{PriorGS's performance after integration with different 2D HDR reconstruction methods on (\textit{top half}) re-exposed LDR-NVS on real scenes of HDR-NeRF and HDR-Plenoxels datasets and (\textit{bottom half}) HDR-NVS and re-exposed LDR-NVS on synthetic scenes of HDR-NeRF dataset. {\setlength{\fboxsep}{2pt}\colorbox{red!30}{red}} and {\setlength{\fboxsep}{2pt}\colorbox{yellow!30}{yellow}} denote the best and second-best results, respectively.} 
\label{tab:robust_to_ldr2hdr}
\begin{center}
\setlength{\tabcolsep}{3.5pt}
\begin{tabular}{@{}l|ccc|ccc}
\toprule
\multirow{2}{*}{Method}
& \multicolumn{3}{c|}{HDR-NeRF-Real -- LDR}
& \multicolumn{3}{c}{HDR-Plenoxels-Real -- LDR} \\
\cmidrule(lr){2-4} \cmidrule(lr){5-7}
& PSNR$\uparrow$ & SSIM$\uparrow$ & LPIPS$\downarrow$
& PSNR$\uparrow$ & SSIM$\uparrow$ & LPIPS$\downarrow$ \\
\midrule
GaussHDR
& 16.28 & 0.7849 & 0.176 & 17.73 & 0.7141 & 0.197 \\
SeHDR
& 13.75 & 0.6991 & 0.182 & 14.21 & 0.6644 & 0.232 \\
\textbf{PriorGS + DeepHDR}
& 20.38 & 0.8604 & \cellcolor{yellow!30}0.085 & 18.43 & 0.8111 & 0.115 \\
\textbf{PriorGS + ExpandNet}
& \cellcolor{yellow!30}21.18 & \cellcolor{yellow!30}0.8771 & 0.094 & \cellcolor{yellow!30}21.23 & \cellcolor{yellow!30}0.8624 & \cellcolor{yellow!30}0.100 \\
\textbf{PriorGS + GMNet}
& 19.36 & 0.8482 & 0.095 & 16.94 & 0.7887 & 0.126 \\
\textbf{PriorGS + IntrinsicHDR (Primary)}
& \cellcolor{red!30}24.31 & \cellcolor{red!30}0.9248 & \cellcolor{red!30}0.064 & \cellcolor{red!30}23.00 & \cellcolor{red!30}0.9021 & \cellcolor{red!30}0.092 \\

\midrule
\multirow{2}{*}{Method}
& \multicolumn{3}{c|}{HDR-NeRF-Syn -- HDR} & \multicolumn{3}{c}{HDR-NeRF-Syn -- LDR} \\
\cmidrule(lr){2-4} \cmidrule(lr){5-7}
& PSNR$\uparrow$ & SSIM$\uparrow$ & LPIPS$\downarrow$
& PSNR$\uparrow$ & SSIM$\uparrow$ & LPIPS$\downarrow$ \\
\midrule
GaussHDR
& 26.96 & 0.8828 & 0.114 & 11.99 & 0.5848 & 0.302 \\
SeHDR
& 15.14 & 0.6405 & 0.262 & 15.76 & 0.7450 & 0.169 \\
\textbf{PriorGS + DeepHDR}
& 30.81 & 0.9512 & 0.052 & \cellcolor{yellow!30}23.76 & 0.8818 & 0.096 \\
\textbf{PriorGS + ExpandNet}
& 30.35 & 0.9437 & \cellcolor{yellow!30}0.039 & \cellcolor{red!30}24.06 & \cellcolor{red!30}0.9125 & \cellcolor{red!30}0.067 \\
\textbf{PriorGS + GMNet}
& \cellcolor{red!30}31.77 & \cellcolor{yellow!30}0.9540 & 0.043 & 23.64 & 0.8838 & 0.083 \\
\textbf{PriorGS + IntrinsicHDR (Primary)}
& \cellcolor{yellow!30}31.22 & \cellcolor{red!30}0.9569 & \cellcolor{red!30}0.038 & 22.96 & \cellcolor{yellow!30}0.8937 & \cellcolor{yellow!30}0.081 \\
\bottomrule
\end{tabular}
\end{center}
\end{table}

\subsection{Further Quantitative Results} \label{subsec:apx_quant_results}
\textbf{PriorGS's Robustness across 2D Single-exposure HDR Reconstruction Methods.} To evaluate PriorGS’s robustness to the choice of 2D single-exposure HDR reconstruction method, we replace our primary 2D HDR reconstruction method, IntrinsicHDR \citep{intrinsic_hdr}, with three alternatives: DeepHDR \citep{deephdr}, ExpandNet \citep{ExpandNet}, and GMNet \citep{gmnet}. Table \ref{tab:robust_to_ldr2hdr} summarizes the results for these alternative configurations and compares them with our closest baselines, GaussHDR \citep{GaussHDR} and SeHDR \citep{SeHDR}. We make the following observations: (1) Regardless of the 2D HDR reconstruction method integrated into PriorGS, every resulting configuration consistently outperforms both baselines across both datasets, including real and synthetic scenes, and on both HDR-NVS and re-exposed LDR-NVS, with a single exception: the GMNet configuration trails GaussHDR in PSNR on HDR-Plenoxels dataset while surpassing it by large margins in SSIM and LPIPS. These results further validate the effectiveness of PriorGS in single-exposure HDR-NVS and LDR re-exposure; (2) On synthetic scenes, the primary and all alternative 2D HDR reconstruction methods yield comparable PriorGS performance, demonstrating robustness to method choice. However, on the more challenging real scenes, which span diverse cameras and exposure settings, PriorGS exhibits more sensitivity to the selected reconstruction method; and (3) Our primary 2D HDR reconstruction method, IntrinsicHDR, yields the most robust performance among all evaluated methods: it outperforms all alternatives on the more challenging real scenes while achieving comparable results on synthetic scenes.

\textbf{LDR-NVS Performance at the Observed Exposure.} In addition to our primary evaluation of re-exposed LDR-NVS at unseen exposures in Table \ref{tab:performance}, we evaluate LDR-NVS quality at the observed (training) exposure. Our base-gain HDR representation enables joint reconstruction of an LDR-base at the input exposure and an HDR scene. This, therefore, allows PriorGS to render a novel LDR view at the training exposure through two pathways: directly from the LDR-base or by re-exposing the HDR rendering through the learned CRF. We opt to use the direct LDR-base rendering as our primary output when LDR view at training exposure is requested. The results averaged across all five single-exposure training runs for real and synthetic scenes are summarized in Tables \ref{tab:observed_ldr} and \ref{tab:observed_ldr2}, respectively. PriorGS outperforms all baselines on the real HDR-NeRF and HDR-Plenoxels scenes and performs comparably to Mono-HDR-GS \citep{MonoHDR} on the synthetic HDR-NeRF scenes. Together with Table \ref{tab:performance}, these results show that PriorGS not only generalizes substantially better to unseen exposures but also achieves excellent LDR-NVS performance at the observed exposure. This highlights the superiority of PriorGS and the benefit of our base-gain HDR representation in providing a clean LDR base scene while simultaneously maintaining a high-quality HDR scene representation.
\begin{table}[t]
\caption{Quantitative results on LDR-NVS at the training exposure on real scenes of HDR-NeRF and HDR-Plenoxels datasets. {\setlength{\fboxsep}{2pt}\colorbox{red!30}{red}} and {\setlength{\fboxsep}{2pt}\colorbox{yellow!30}{yellow}} denote the best and second-best results, respectively.}
\label{tab:observed_ldr}
\begin{center}
\begin{tabular}{l|ccc|ccc}
\toprule
\multirow{2}{*}{Method}
& \multicolumn{3}{c|}{HDR-NeRF-Real -- LDR}
& \multicolumn{3}{c}{HDR-Plenoxels-Real -- LDR} \\
\cmidrule(lr){2-4}
\cmidrule(lr){5-7}
& PSNR $\uparrow$ & SSIM $\uparrow$ & LPIPS $\downarrow$
& PSNR $\uparrow$ & SSIM $\uparrow$ & LPIPS $\downarrow$ \\
\midrule
HDR-GS
& 35.24 & \cellcolor{yellow!30}0.9667 & 0.0198 
& 30.43 & 0.9428 & 0.0492 \\
GaussHDR
& 34.42 & 0.9662 & 0.0366 
& 30.54 & 0.9447 & 0.0500 \\
Mono-HDR-GS
& \cellcolor{yellow!30}35.65 & 0.9656 & \cellcolor{yellow!30}0.0172 
& \cellcolor{yellow!30}31.02 & \cellcolor{yellow!30}0.9465 & \cellcolor{yellow!30}0.0464 \\
SeHDR
& 30.08 & 0.9632 & 0.0346 
& 21.77 & 0.9037 & 0.1157 \\
\textbf{PriorGS (ours)}
& \cellcolor{red!30}36.10 & \cellcolor{red!30}0.9700 & \cellcolor{red!30}0.0158 
& \cellcolor{red!30}31.71 & \cellcolor{red!30}0.9482 & \cellcolor{red!30}0.0439 \\
\bottomrule
\end{tabular}
\end{center}
\end{table}

\begin{table}[t]
\caption{Quantitative results on LDR-NVS at the training exposure on synthetic scenes of HDR-NeRF dataset. {\setlength{\fboxsep}{2pt}\colorbox{red!30}{red}} and {\setlength{\fboxsep}{2pt}\colorbox{yellow!30}{yellow}} denote the best and second-best results, respectively.}
\label{tab:observed_ldr2}
\begin{center}
\begin{tabular}{l|ccc}
\toprule
\multirow{2}{*}{Method}
& \multicolumn{3}{c}{HDR-NeRF-Syn -- LDR} \\
\cmidrule(lr){2-4}
& PSNR $\uparrow$ & SSIM $\uparrow$ & LPIPS $\downarrow$ \\
\midrule
HDR-GS
& 41.56 & 0.9825 & 0.0088 \\
GaussHDR
& 40.90 & 0.9795 & 0.0154 \\
Mono-HDR-GS
& \cellcolor{red!30}41.80 & \cellcolor{yellow!30}0.9828 & \cellcolor{red!30}0.0083 \\
SeHDR
& 32.53 & 0.9768 & 0.0199 \\
\textbf{PriorGS (ours)}
& \cellcolor{yellow!30}41.77 & \cellcolor{red!30}0.9836 & \cellcolor{yellow!30}0.0087 \\
\bottomrule
\end{tabular}
\end{center}
\end{table}

\textbf{Comparison Against Scaffold-GS variant of GaussHDR.} Beyond the baselines introduced in Section \ref{sec:exp_settings}, Table \ref{tab:scaffold_comp} compares PriorGS with Scaffold-GS variant of GaussHDR \citep{GaussHDR}. We exclude this variant from the main comparison as all other methods, including ours, use standard 3DGS representation \citep{GaussianSplats}. Despite GaussHDR’s superior Scaffold-GS representation, PriorGS consistently outperforms it on both datasets, across real and synthetic scenes, and in both HDR-NVS and re-exposed LDR-NVS, highlighting the effectiveness of our proposed method.

\begin{table}[t]
\caption{Quantitative comparison of PriorGS vs. the 3DGS \citep{GaussianSplats} and Scaffold-GS$\dagger$ \citep{scaffold-GS} variants of GaussHDR \citep{GaussHDR} for (\textit{top half}) re-exposed LDR-NVS on real scenes of HDR-NeRF and HDR-Plenoxels datasets and (\textit{bottom half}) HDR-NVS and re-exposed LDR-NVS on synthetic scenes of HDR-NeRF dataset. {\setlength{\fboxsep}{2pt}\colorbox{red!30}{red}} and {\setlength{\fboxsep}{2pt}\colorbox{yellow!30}{yellow}} denote the best and second-best results, respectively.} 
\label{tab:scaffold_comp}
\begin{center}
\begin{tabular}{l|ccc|ccc}
\toprule
\multirow{2}{*}{Method}
& \multicolumn{3}{c|}{HDR-NeRF-Real -- LDR}
& \multicolumn{3}{c}{HDR-Plenoxels-Real -- LDR} \\
\cmidrule(lr){2-4} \cmidrule(lr){5-7}
& PSNR $\uparrow$ & SSIM $\uparrow$ & LPIPS $\downarrow$
& PSNR $\uparrow$ & SSIM $\uparrow$ & LPIPS $\downarrow$ \\
\midrule
GaussHDR$\dagger$
& \cellcolor{yellow!30}22.76 & \cellcolor{yellow!30}0.8739 & \cellcolor{yellow!30}0.128 & \cellcolor{yellow!30}21.03 & \cellcolor{yellow!30}0.8635 & \cellcolor{yellow!30}0.115 \\
GaussHDR
& 16.28 & 0.7849 & 0.176 & 17.73 & 0.7141 & 0.197 \\
\textbf{PriorGS (Ours)}
& \cellcolor{red!30}24.31 & \cellcolor{red!30}0.9248 & \cellcolor{red!30}0.064 & \cellcolor{red!30}23.00 & \cellcolor{red!30}0.9021 & \cellcolor{red!30}0.092 \\

\midrule
\multirow{2}{*}{Method}
& \multicolumn{3}{c|}{HDR-NeRF-Syn -- HDR} & \multicolumn{3}{c}{HDR-NeRF-Syn -- LDR} \\
\cmidrule(lr){2-4} \cmidrule(lr){5-7}
& PSNR $\uparrow$ & SSIM $\uparrow$ & LPIPS $\downarrow$
& PSNR $\uparrow$ & SSIM $\uparrow$ & LPIPS $\downarrow$ \\
\midrule
GaussHDR$\dagger$
& \cellcolor{yellow!30}28.45 & \cellcolor{yellow!30}0.8970 & \cellcolor{yellow!30}0.087 & \cellcolor{yellow!30}12.56 & \cellcolor{yellow!30}0.6293 & \cellcolor{yellow!30}0.295 \\
GaussHDR
& 26.96 & 0.8828 & 0.114 & 11.99 & 0.5848 & 0.302 \\
\textbf{PriorGS (Ours)}
& \cellcolor{red!30}31.22 & \cellcolor{red!30}0.9569 & \cellcolor{red!30}0.038 & \cellcolor{red!30}22.96 & \cellcolor{red!30}0.8937 & \cellcolor{red!30}0.081 \\
\bottomrule
\end{tabular}
\end{center}
\end{table}

\subsection{Further Qualitative Results} \label{subsec:apx_qual_results}
\textbf{Additional HDR-NVS Results.} Complementing Figure \ref{fig:hdr_qualitative}, Figure \ref{fig:apx_hdr_qualitative} presents additional qualitative HDR-NVS results across two scenes and three different input exposures. We observe that GaussHDR and SeHDR recover the HDR radiance inaccurately and inconsistently across different input exposures, displaying severe color and geometry artifacts particularly in under-exposed and over-exposed regions. In contrast, PriorGS behaves much more consistently across input exposure levels and substantially outperforms the baselines in faithful HDR radiance recovery, consistent with our observations in Figure \ref{fig:hdr_qualitative}. These results further demonstrate the effectiveness of LDR-to-HDR prior supervision within our closed-loop reconstruction framework.
\begin{figure}[h]
    \begin{center}
    \includegraphics[width=0.95\linewidth]{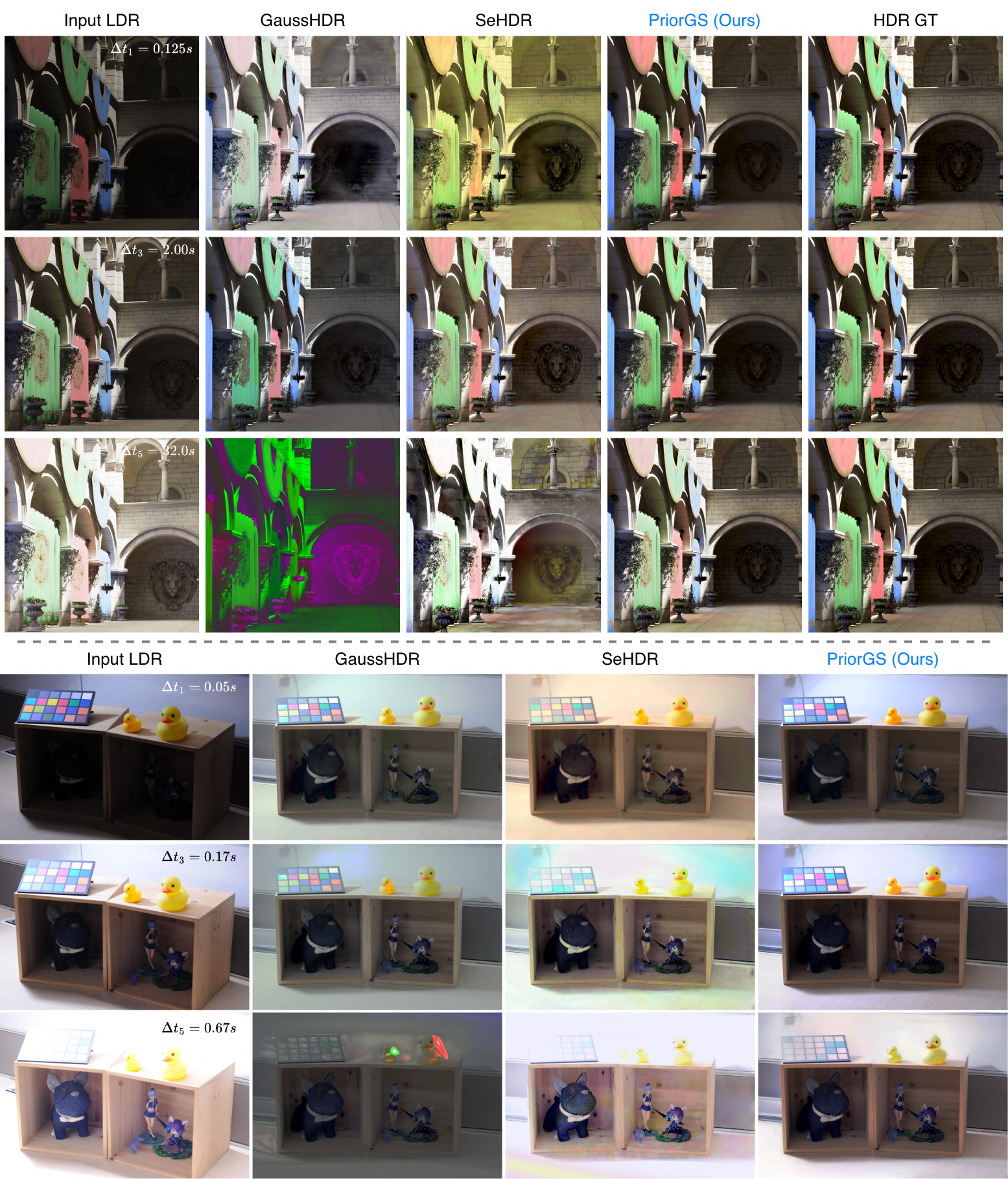}
    \end{center}
    \caption{Further qualitative HDR-NVS results given low- ($\Delta t_1$), medium- ($\Delta t_3$), and high-exposure ($\Delta t_5$) inputs across (\textit{top half}) synthetic and (\textit{bottom half}) real scenes.}
    \label{fig:apx_hdr_qualitative}
\end{figure}

\textbf{Additional Re-exposed LDR-NVS Results.} We complement Figure \ref{fig:ldr-qualitative} by presenting additional qualitative re-exposed LDR-NVS comparisons across two scenes in Figure \ref{fig:apx_ldr_qualitative}. Specifically, given single-exposure LDR inputs, we compare the methods’ novel-view LDR re-exposures at all five exposure levels, including the training (input) exposure. We observe that HDR-GS, GaussHDR, and Mono-HDR-GS reproduce novel-view LDR images at the observed training exposure with high fidelity, showing that their flexible MLP-based CRFs can closely fit the supervised exposure. However, these CRFs generalize poorly to unseen exposures, resulting in overexposed or underexposed renderings and color artifacts. This sharp gap between observed- and unseen-exposure performance indicates overfitting to the training exposure, consistent with our earlier discussions in Section \ref{sec:intro} and observations in Figure \ref{fig:ldr-qualitative}. SeHDR mitigates this overfitting and the resulting exposure errors and color artifacts by heuristically fixing a CRF for each scene. However, it still suffers from limited scene adaptability and exhibits a narrow response range due to inaccurate CRF preselection. In contrast, PriorGS produces substantially more accurate novel-view LDR re-exposures at unseen exposure levels while maintaining high fidelity at the observed training exposure, consistently outperforming all baselines. This behavior aligns with our earlier observations in Figure~\ref{fig:ldr-qualitative} and highlights the intended balance of our CRF formulation: the prior regularizes its behavior beyond the supervised exposure, while its learnable parameters exhibit excellent scene adaptability. Together, these results validate the need for a learnable CRF prior in single-exposure HDR-NVS and demonstrate the effectiveness of our proposed model.
\begin{figure}[h]
    \begin{center}
    \includegraphics[width=0.95\linewidth]{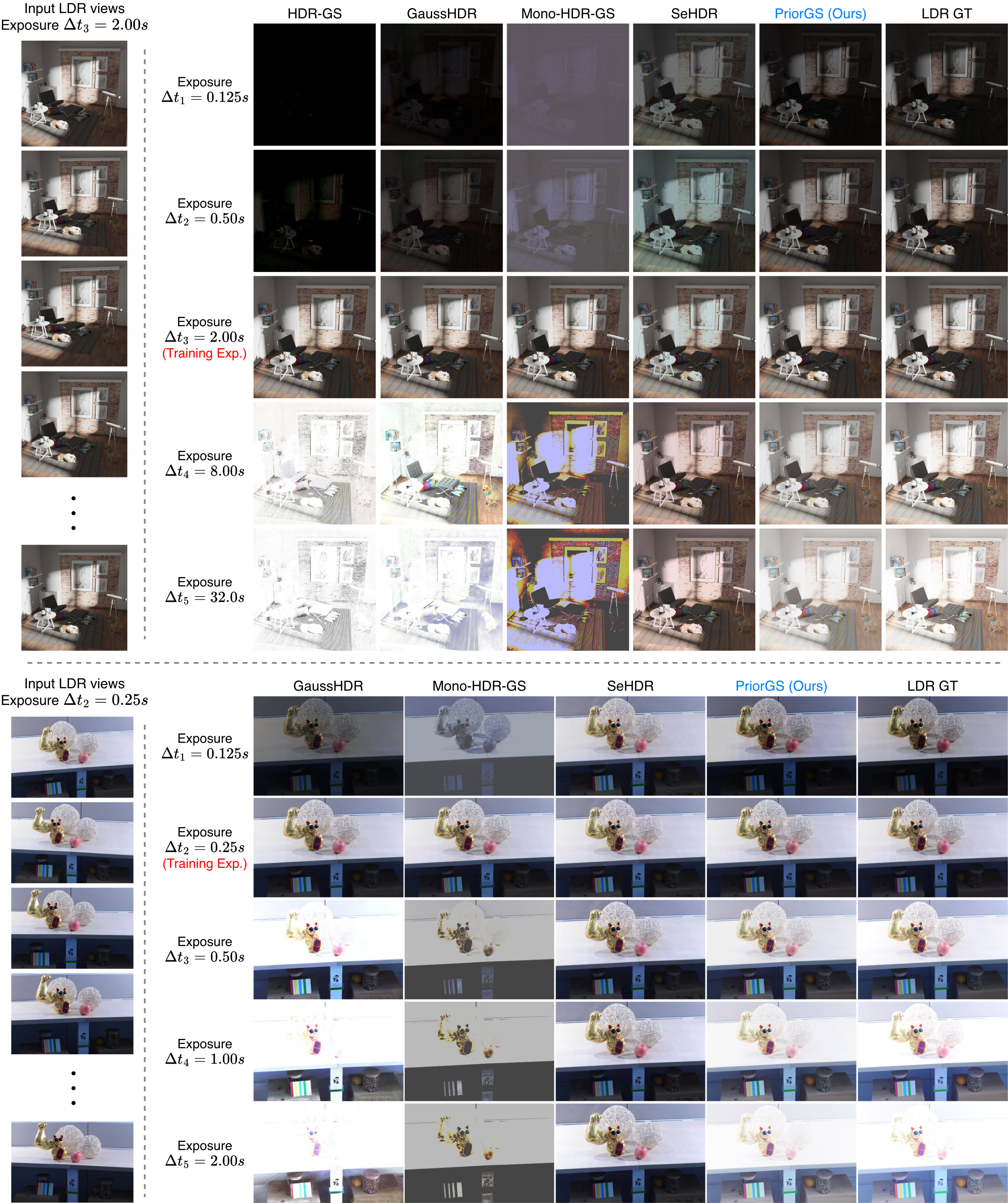}
    \end{center}
    \caption{Further qualitative re-exposed LDR-NVS results. Using single-exposure LDR views for training, we render a novel view at all five exposures, including the training exposure. The training exposure is (\textit{top half}) $\Delta t_3$ for a synthetic scene and (\textit{bottom half}) $\Delta t_2$ for a real scene.}
    \label{fig:apx_ldr_qualitative}
\end{figure}

\textbf{Qualitative Analysis of Learned CRF.} In Figure \ref{fig:crf-qual}, we compare our learned CRF curve with the MLP-based CRF of GaussHDR \citep{GaussHDR} for the synthetic \textit{bear} scene. Both methods are trained under the same single-exposure setting, using LDR views at exposure $\Delta t_1=0.125s$. We additionally present their novel-view LDR re-exposures at the training exposure $\Delta t_1=0.125s$, and the test exposure, $\Delta t_3=2.0s$. Both methods faithfully reproduce the novel-view LDR re-exposure at the training exposure, where the target intensities fall within the output range supervised by training observations. Beyond this range, however, the flexible MLP-based CRF curve exhibits pronounced divergence between its RGB responses, early saturation of the blue channel, and local violations of monotonicity. Without explicit priors to regularize the response inside this unsupervised region, its flexibility permits implausible response learning. Since rendering at the test exposure involves LDR intensities beyond the supervised region, MLP-based re-exposure leads to substantial brightness and color distortions with a blue tint. Our proposed CRF, however, leverages its DoRF-derived prior to maintain monotonicity and a more plausible response beyond the supervised range, producing substantially more faithful LDR renderings at the test exposure. We note that the differences between the learned CRFs within the supervised range is due to different HDR reconstructions by the two methods, as each CRF maps its respective reconstruction to the target exposures. We mark the approximate supervision boundary in Figure \ref{fig:crf-qual} at the $99.5$-th percentile of pixel intensities across all input LDR views. These results validate the need for CRF regularization with a learnable prior that ensures monotonicity and promotes plausible response learning.
\begin{figure}[t]
    \begin{center}
    \includegraphics[width=0.95\linewidth]{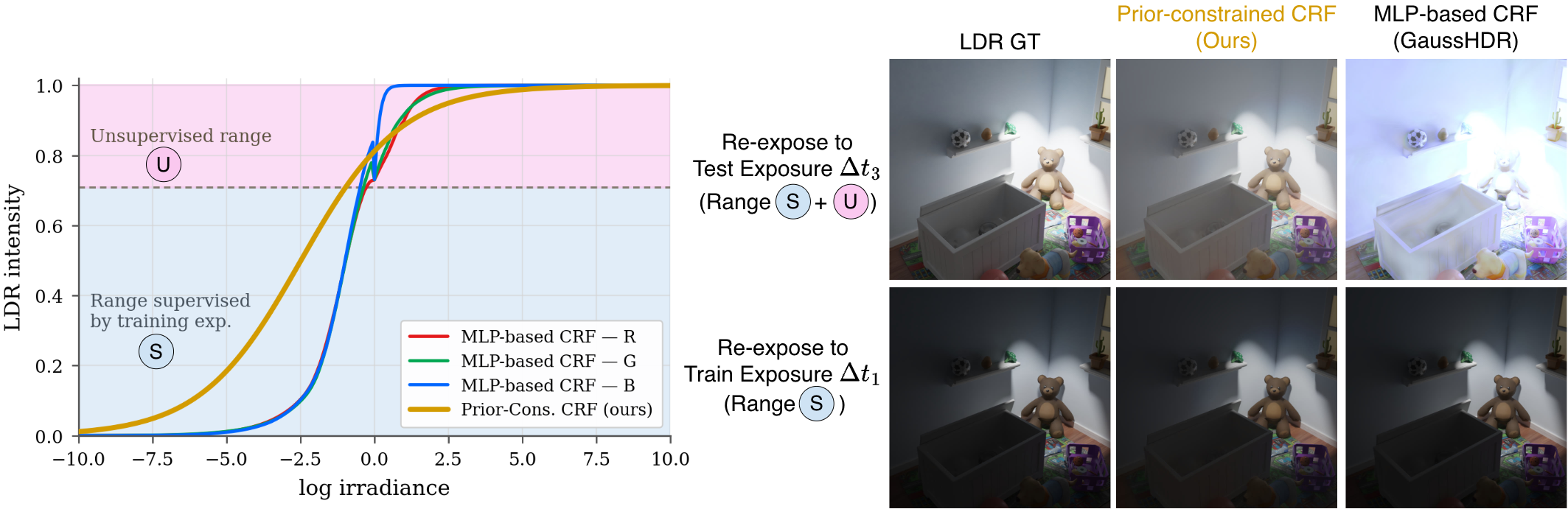}
    \end{center}
    \caption{Learned MLP-based and Prior-Constrained CRF curves and novel-view LDR re-exposures. Both CRFs faithfully reproduce the LDR view at training exposure, where target intensities fall within their CRFs' supervised range. Beyond this range, MLP-based CRF curve exhibits RGB divergence, early saturation, and nonmonotonic behavior, resulting in poor rendering at test exposure that involves mapping to CRF's unsupervised range. Our prior-constrained CRF maintains a plausible, monotonic response in this range, producing more faithful test-exposure renderings.}
    \label{fig:crf-qual}
\end{figure}

\subsection{Extension of 3D HDR Appearance Editing}
\label{subsec:apx_application}
\textbf{Discussion.} In Section \ref{subsec:perf_application}, we discussed that without a 3D gain field, our 3D HDR editing capability cannot be achieved by scaling Gaussians' HDR radiance, as this does not expand the dynamic range (DR) and is only equivalent to an exposure shift. Figure \ref{fig:hdr-hist} illustrates this point: Scaling 3D HDR radiance with $\rho$ only shifts the log-luminance histogram and leaves the DR unchanged. In contrast, increasing $\rho$ for 3D gain field scaling substantially expands the DR and changes the histogram. The histograms show the log-luminance distribution of HDR view 1 from the \textit{desk} scene shown in Figure \ref{fig:hdr-edit}, restricted to values between the $1$st and $99$th percentiles.
\begin{figure}[t]
    \centering

    \begin{subfigure}[b]{0.48\linewidth}
        \centering
        \includegraphics[width=\linewidth]{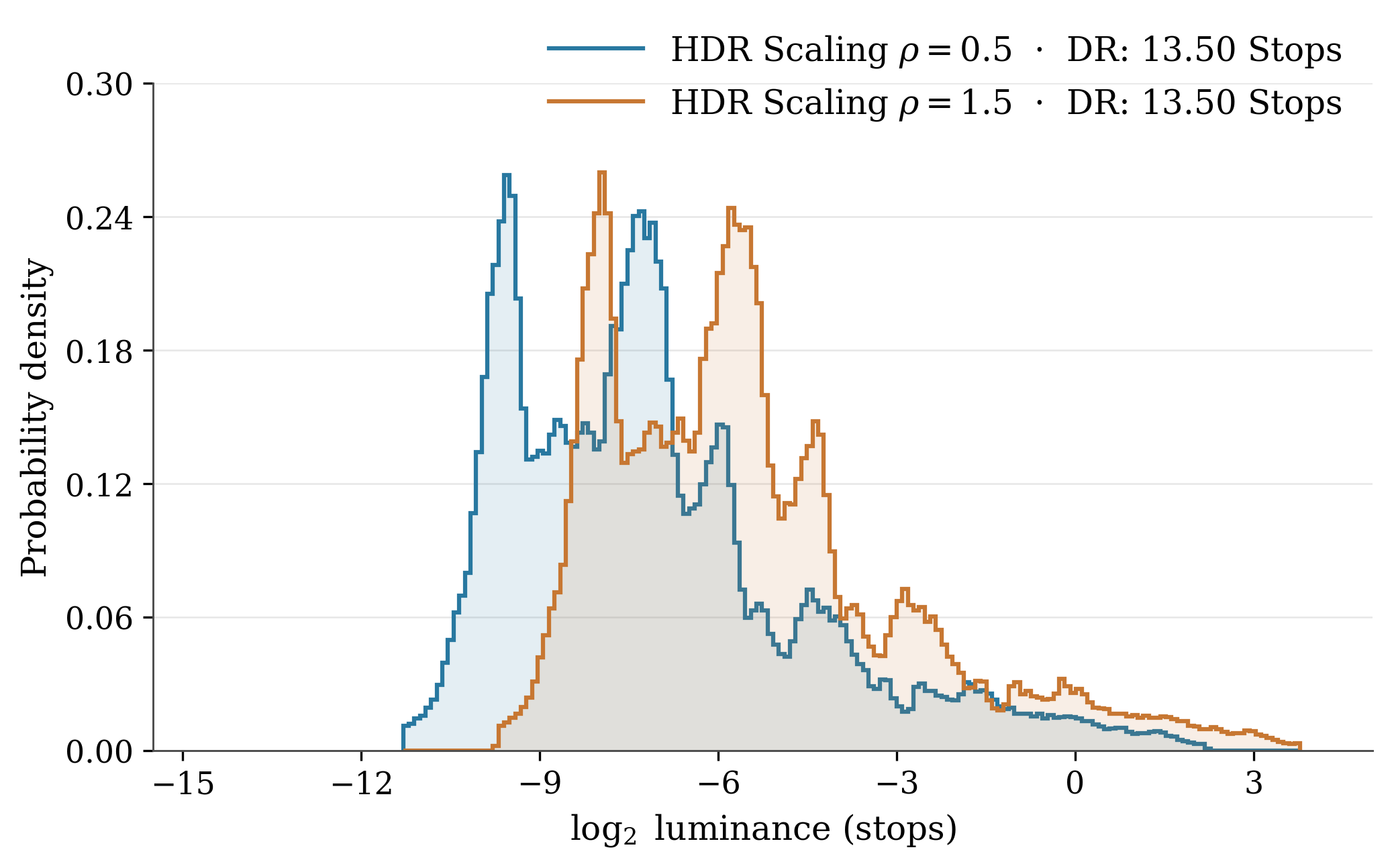}
        \caption{Log-luminance histogram; HDR scaling}
    \end{subfigure}
    \hfill
    \begin{subfigure}[b]{0.48\linewidth}
        \centering
        \includegraphics[width=\linewidth]{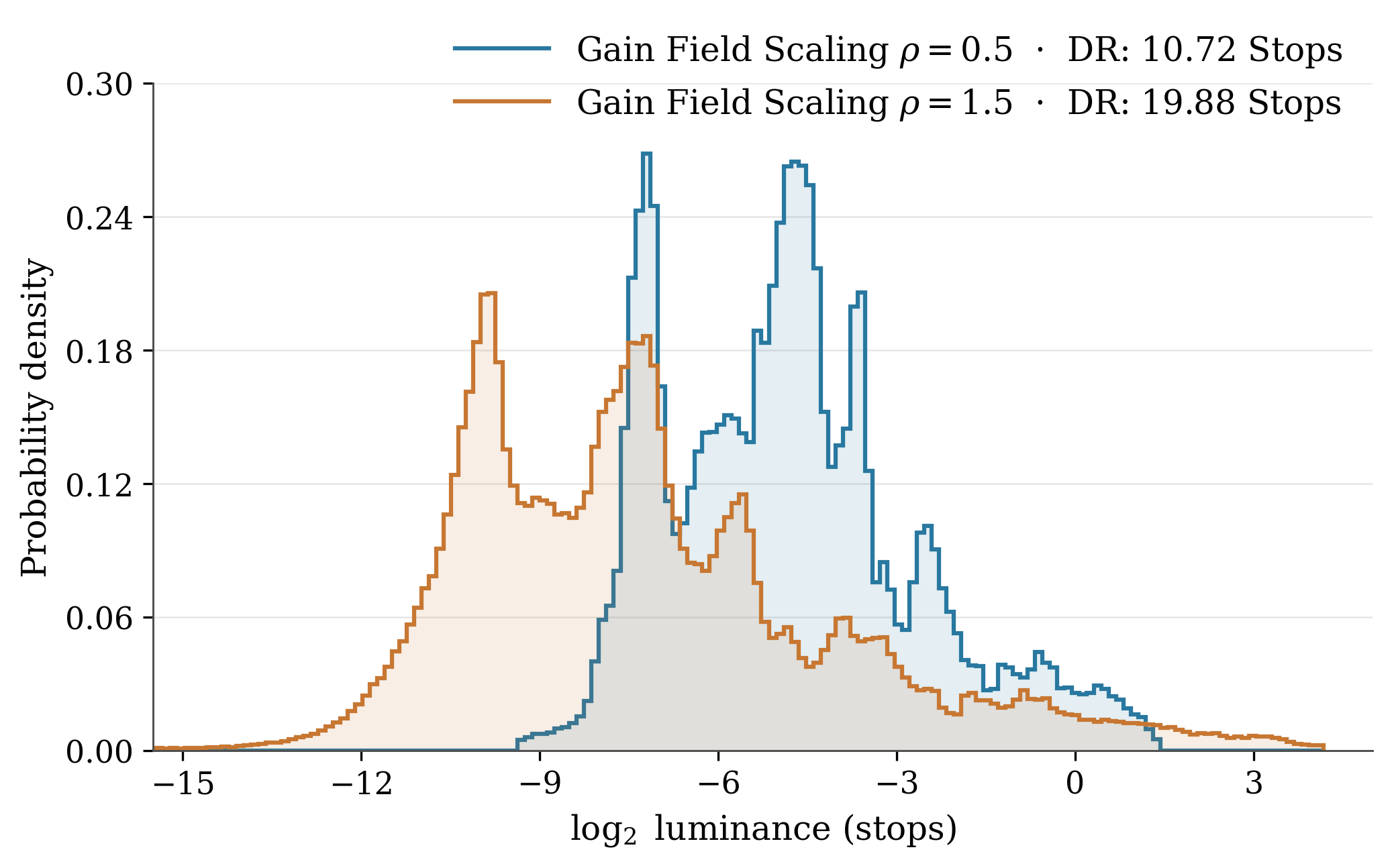}
        \caption{Log-luminance histogram; Gain Field scaling}
    \end{subfigure}

    \caption{Changes in an HDR view’s log-luminance histogram by varying $\rho$ from $0.5$ to $1.5$ when $\rho$ scales (a) 3D HDR radiance and (b) the 3D gain field. Increasing $\rho$ for 3D HDR radiance scaling only shifts the histogram (equivalent to an exposure shift) and leaves the dynamic range (DR) unchanged. For 3D gain-field scaling, it instead changes the histogram and substantially expands the DR.}
    \label{fig:hdr-hist}
\end{figure}

\textbf{Another Example of Global Editing.} Figure \ref{fig:hdr-edit-apx} complements Figure \ref{fig:hdr-edit} by demonstrating 3D HDR appearance editing on another scene using the same $\rho$-scaling of the 3D gain field. As observed before, increasing $\rho$ expands the DR and enhances the HDR appearance and chromatic contrast.
\begin{figure}[t]
    \begin{center}
    \includegraphics[width=0.95\linewidth]{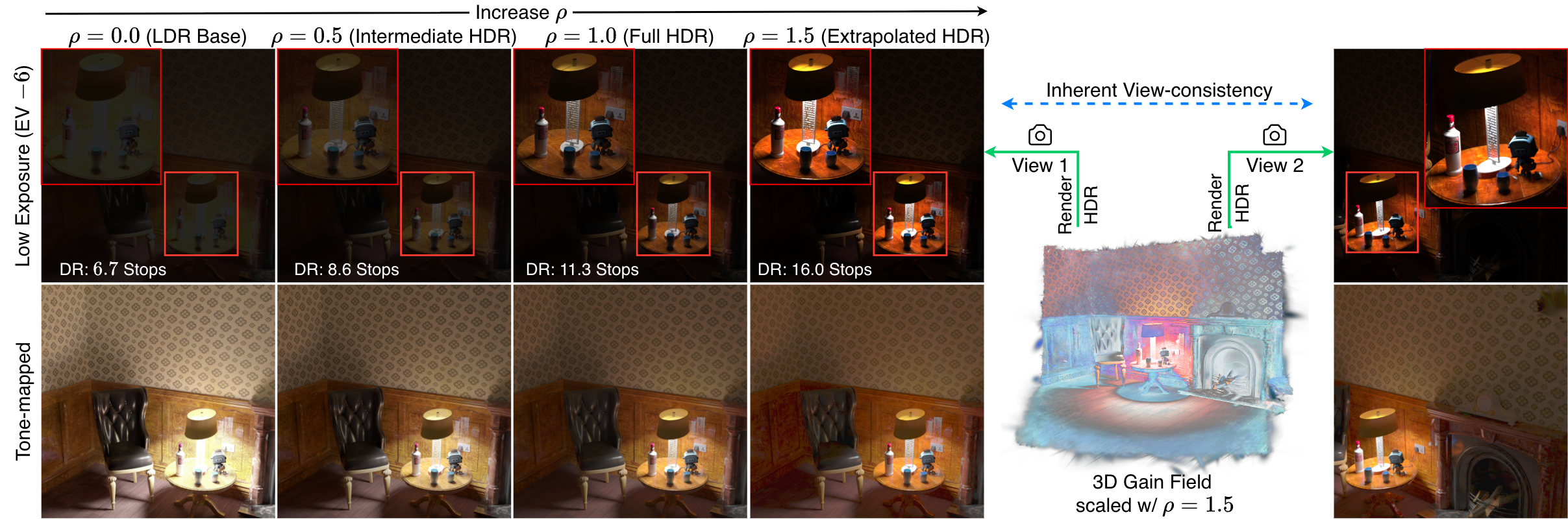}
    \end{center}
    \caption{Further example of 3D HDR appearance editing via gain field scaling with $\rho$. We sweep $\rho$ and show an HDR view at a low-exposure (\textit{top row}) and after a fixed tone-mapping (\textit{bottom row}). Increasing $\rho$ expands dynamic range (DR) and enhances HDR appearance and chromatic contrast. Since $\rho$ operates directly on 3D Gaussians, the edits are inherently view-consistent.}
    \label{fig:hdr-edit-apx}
\end{figure}

\textbf{Extension to Localized Editing.} We extend our 3D HDR appearance editing application to support localized edits. Specifically, instead of varying $\rho$ globally across all 3D Gaussians (i.e. scaling the gain field globally with $\rho$), we adjust it only for a manually selected subset of Gaussians, enabling targeted control over the HDR appearance of a specific scene region or object. Figure \ref{fig:hdr-local} presents the results. We demonstrate different local adjustments to $\rho$ across three scenes, each chosen to achieve a distinct HDR appearance in the desired region. For each scene, we show 2D visualizations of the 3D gain field before and after local editing. We also present HDR renderings from two viewpoints before and after the edits, displayed at a selected exposure. We observe that $\rho$ adjustments achieve the desired local appearance changes. Furthermore, as with global editing, the local changes are inherently view-consistent as $\rho$ adjustments are applied directly on 3D Gaussians.
\begin{figure}[h]
    \begin{center}
    \includegraphics[width=0.95\linewidth]{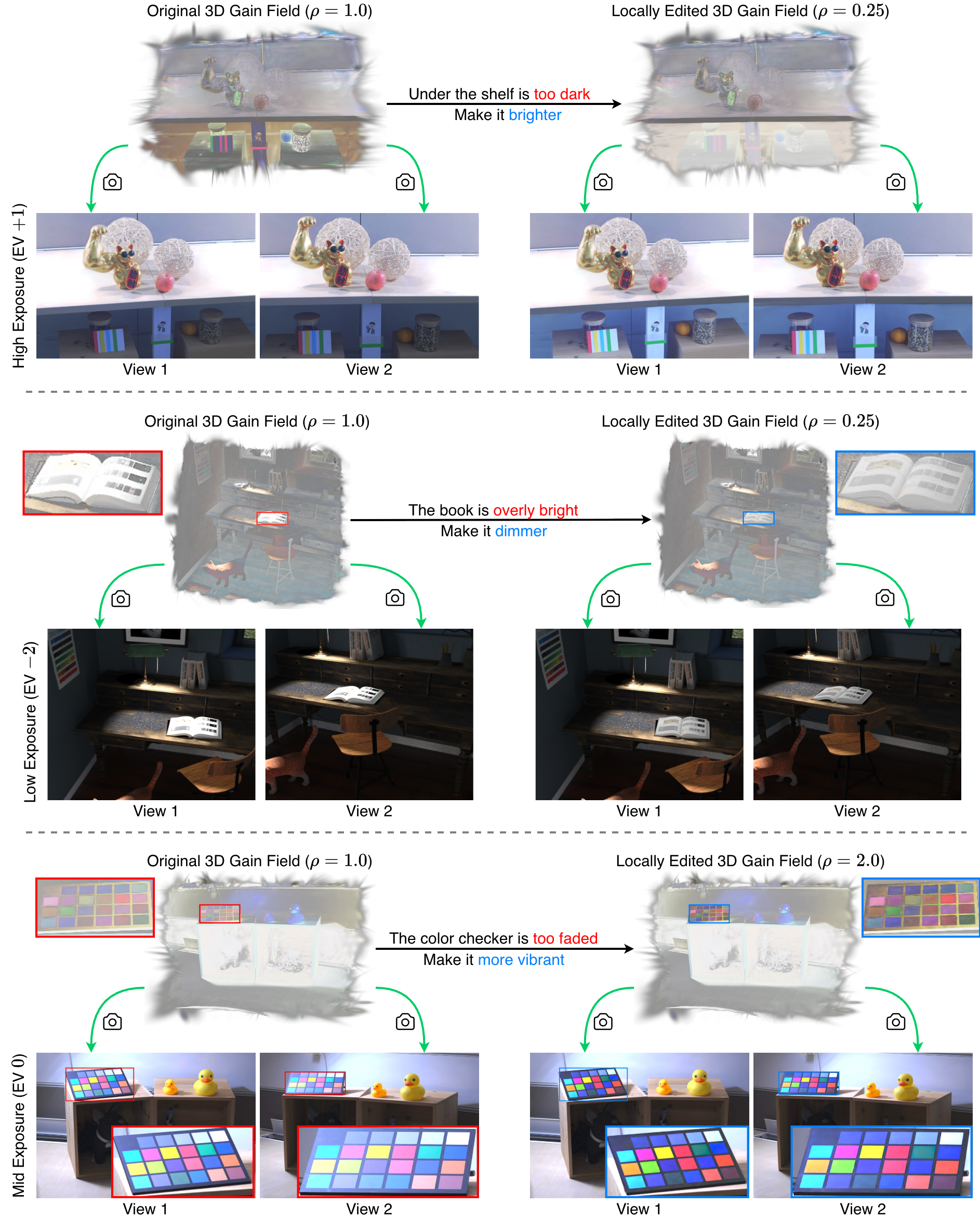}
    \end{center}
    \caption{Localized 3D HDR appearance editing by adjusting $\rho$ for a manually selected subset of 3D Gaussians across three scenes: (\textit{top}) brightening the region beneath a shelf, (\textit{middle}) dimming an overly bright book, and (\textit{bottom}) enhancing a color checker’s vividness. We show before-and-after 2D visualizations of the 3D gain field and HDR renderings from two viewpoints, displayed at a selected exposure. As $\rho$ operates directly in 3D, the edits are inherently view-consistent.}
    \label{fig:hdr-local}
\end{figure}

\subsection{Further Ablations} \label{subsec:apx_abl_study}
We provide qualitative evidence beyond those presented in Figure \ref{fig:main-abl} for the ablations reported in Table \ref{tab:ablation} of main paper. Furthermore, we carry out an ablation on LDR-base supervision.

\textbf{LDR-to-HDR Prior \& Closed-Loop Reconstruction.} Figure \ref{fig:apx_abl_hdr_rec} provides further qualitative ablations of our LDR-to-HDR reconstruction prior and closed-loop reconstruction.
Removing the LDR-to-HDR reconstruction prior yields significantly inaccurate HDR radiance recovery, while disabling the reconstruction loop introduces artifacts and degrades HDR quality. This further confirms the need for learned LDR-to-HDR reconstruction priors for high-fidelity HDR-NVS from single-exposure views and demonstrates the HDR quality gains from closed-loop reconstruction.
\begin{figure}[h]
    \begin{center}
    \includegraphics[width=0.95\linewidth]{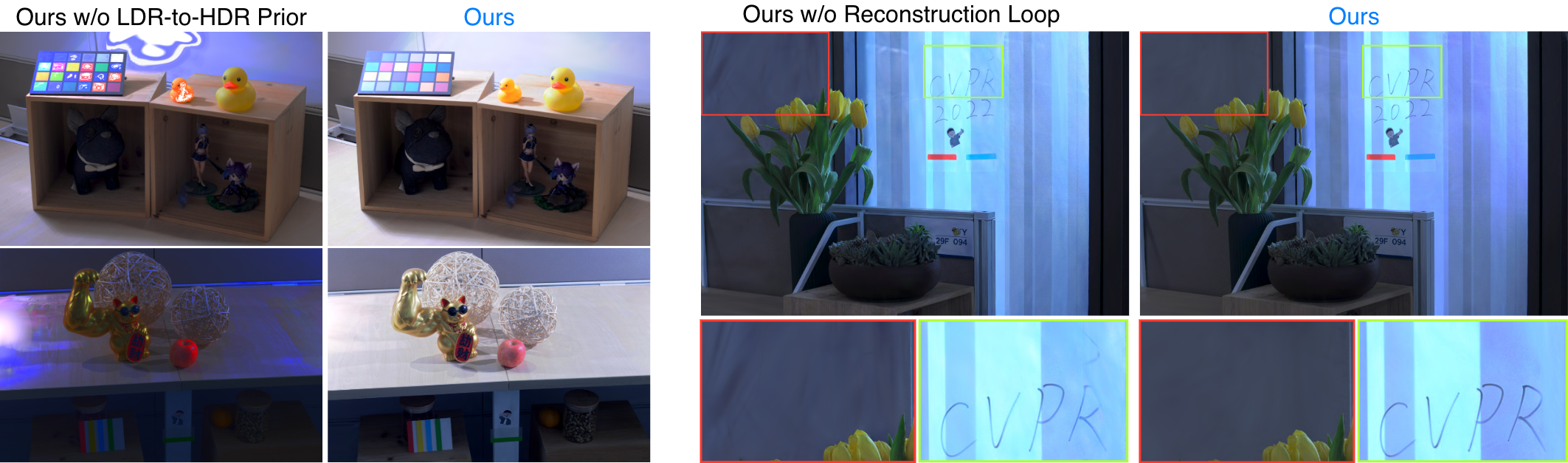}
    \end{center}
    \caption{Further qualitative ablations on HDR views for (\textit{left}) removing LDR-to-HDR reconstruction priors and (\textit{right}) disabling the reconstruction loop. Removing the LDR-to-HDR reconstruction prior results in inaccurate HDR radiance recovery, while disabling the reconstruction loop degrades HDR reconstruction quality. All HDR views are tone-mapped identically for visualization.}
    \label{fig:apx_abl_hdr_rec}
\end{figure}

\textbf{Base-Gain HDR Representation.} Figure \ref{fig:apx_abl_direct_hdr} presents additional qualitative ablations, where we replace our base-gain HDR representation with direct HDR encoding. We note that in our implementation of direct HDR encoding, each Gaussian’s color SH coefficients encode log-radiance. This log-space formulation follows prior works \citep{HDRGS, GaussHDR} and, in our experiments, improves training stability and HDR reconstruction quality compared with linear encoding. We also increase the SH degree of the direct HDR encoding variant to match the color representation capacity of our base–gain representation and ensure fair comparison. 
As shown in Figure \ref{fig:apx_abl_direct_hdr}, cross-view inconsistencies among independently generated pseudo-HDR targets cause severe color and geometry artifacts. However, our base-gain HDR representation promotes robustness to those inconsistencies and achieves superior performance in HDR reconstruction.
Additionally, in Figure \ref{fig:apx_abl_gain_to_mlp}, we present further qualitative ablations for replacing the gain field with MLPs that predict each Gaussian's HDR radiance from its LDR-base color $\vc$, position $\vmu$, and viewpoint direction $\vd$ ($\operatorname{MLP}(\vc,\vmu,\vd)$) or base color $\vc$ alone ($\operatorname{MLP}(\vc)$). 
Both MLP variants exhibit poor performance in reconstructing high-frequency textures in challenging dark regions. For the variant conditioned on LDR-base color, position, and viewing direction, this behavior is consistent with the low-frequency bias of MLPs operating on raw positions and view directions, which hinders reconstruction of sharp surface boundaries and detailed textures \citep{Nerf}. The color-only variant, on the other hand, predicts identical HDR radiance for identical base colors. However, Gaussians with the same base color may have different HDR radiance values, and this variant lacks local texture and broader context to distinguish these cases. Our base-gain representation, however, overcomes all discussed limitations and recovers high-frequency textures with higher fidelity. 
Finally, we note that direct HDR encoding and MLP variants preclude 3D HDR editing application enabled by the gain field, which further highlights the benefits of our base-gain representation. 
\begin{figure}[t]
    \begin{center}
    \includegraphics[width=0.95\linewidth]{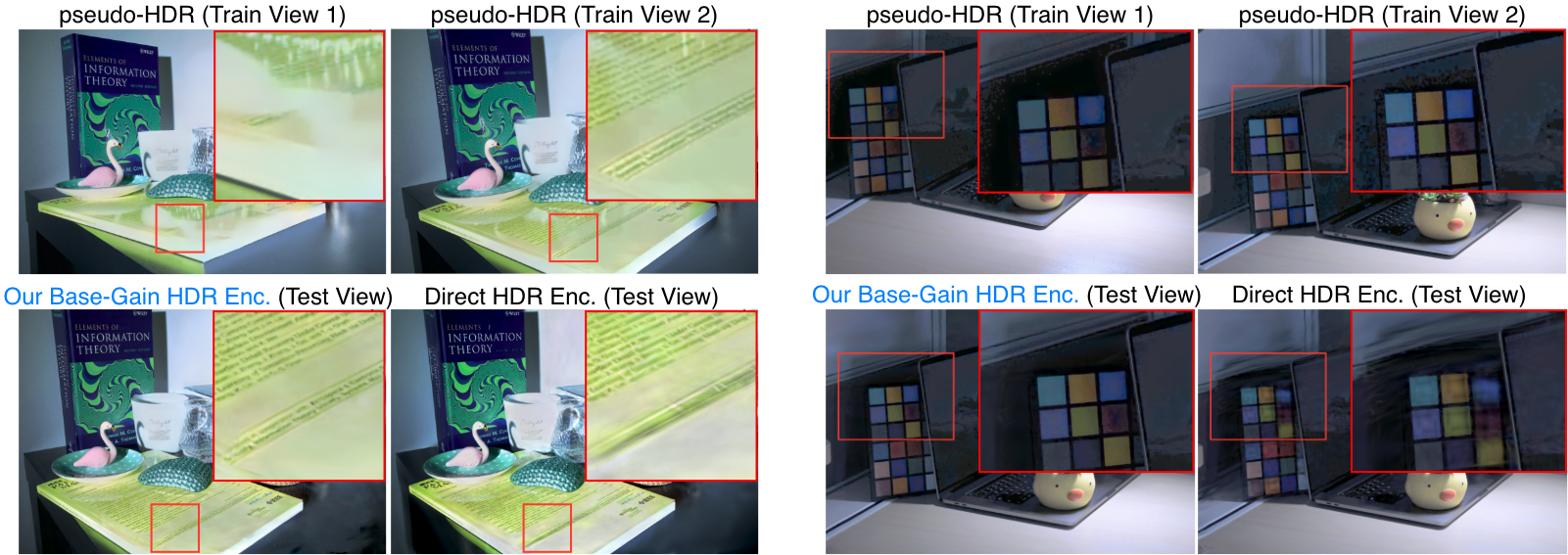}
    \end{center}
    \caption{Further qualitative comparison (ablation) of our base-gain HDR representation against direct HDR encoding. (\textit{top row}) Cross-view inconsistencies in pseudo-HDR targets (\textit{bottom row}) produce severe artifacts with direct HDR encoding, while our base–gain representation promotes robustness to such inconsistencies and achieves higher quality HDR reconstruction.}
    \label{fig:apx_abl_direct_hdr}
\end{figure}
\begin{figure}[h]
    \begin{center}
    \includegraphics[width=0.95\linewidth]{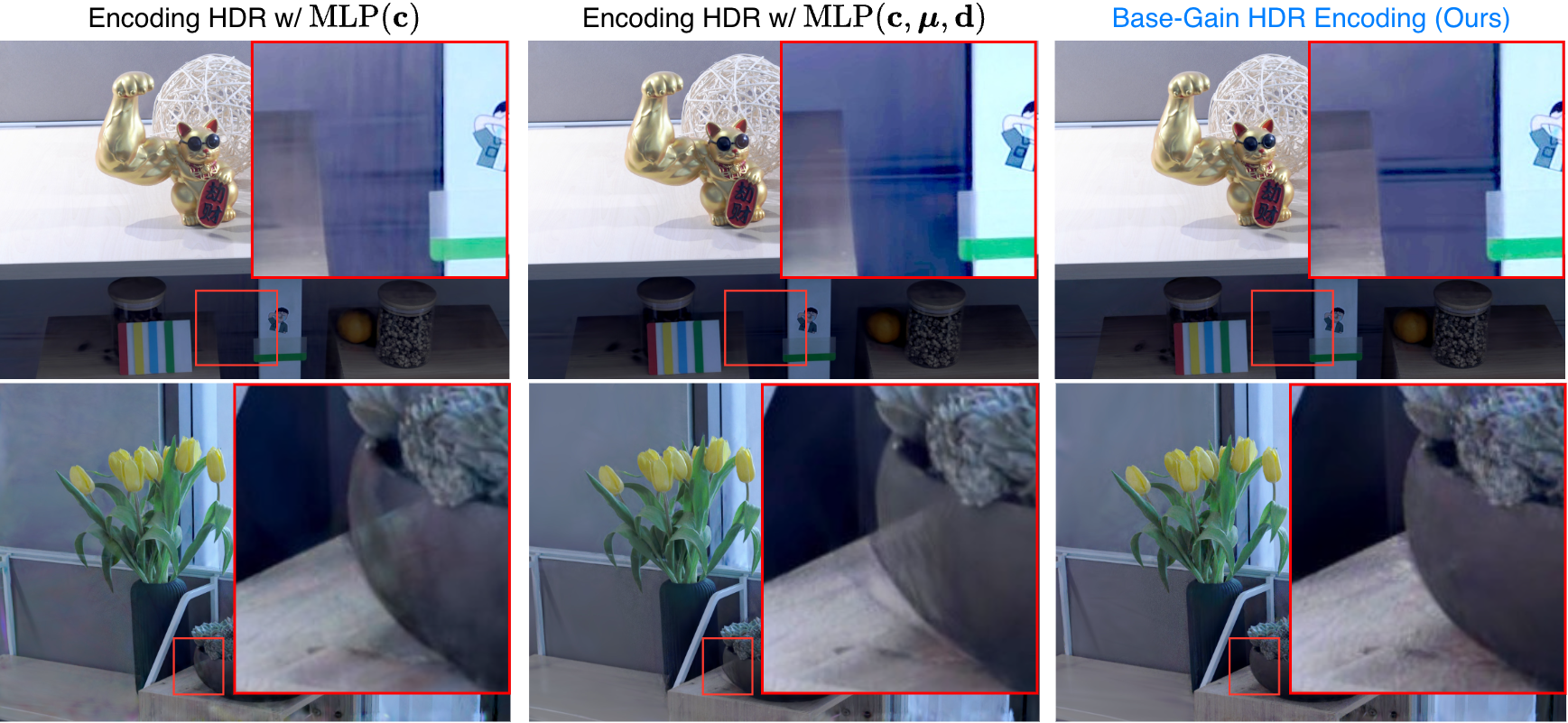}
    \end{center}
    \caption{Further qualitative ablations on HDR views for replacing our gain field with two MLP variants: one predicts each Gaussian's HDR radiance from its LDR-base color $\vc$ alone; the other also uses position $\vmu$ and direction $\vd$. Our base-gain representation outperforms both MLP variants in reconstructing high-frequency details in regions with challenging lighting.}
    \label{fig:apx_abl_gain_to_mlp}
\end{figure}

\textbf{Prior-Constrained CRF Modeling.} In Figure \ref{fig:apx_abl_crf}, we ablate our proposed CRF with (1) fixed gamma encoding $\gamma = 2.2$; (2) an MLP applied on 2D log-irradiance obtained by exposing HDR renderings; and (3) an MLP applied on 3D log-irradiance obtained by exposing per-Gaussian HDR radiance. Following the recommendation of \citet{HDRGS} and \citet{GaussHDR}, we feed MLPs log-transformed irradiance rather than linear irradiance to improve training stability. The MLP-based CRFs closely fit the training exposure but generalize poorly to the unseen exposure, producing overexposure and severe color artifacts. Fixed gamma encoding also fails to reproduce the unseen exposure as it cannot adapt to individual scenes. Our CRF, however, achieves substantially higher fidelity than all alternatives at the unseen exposure: its strong prior regularizes its behavior beyond the observed exposure, while its learnable parameters enable scene-specific adaptation.
\begin{figure}[h]
    \begin{center}
    \includegraphics[width=0.95\linewidth]{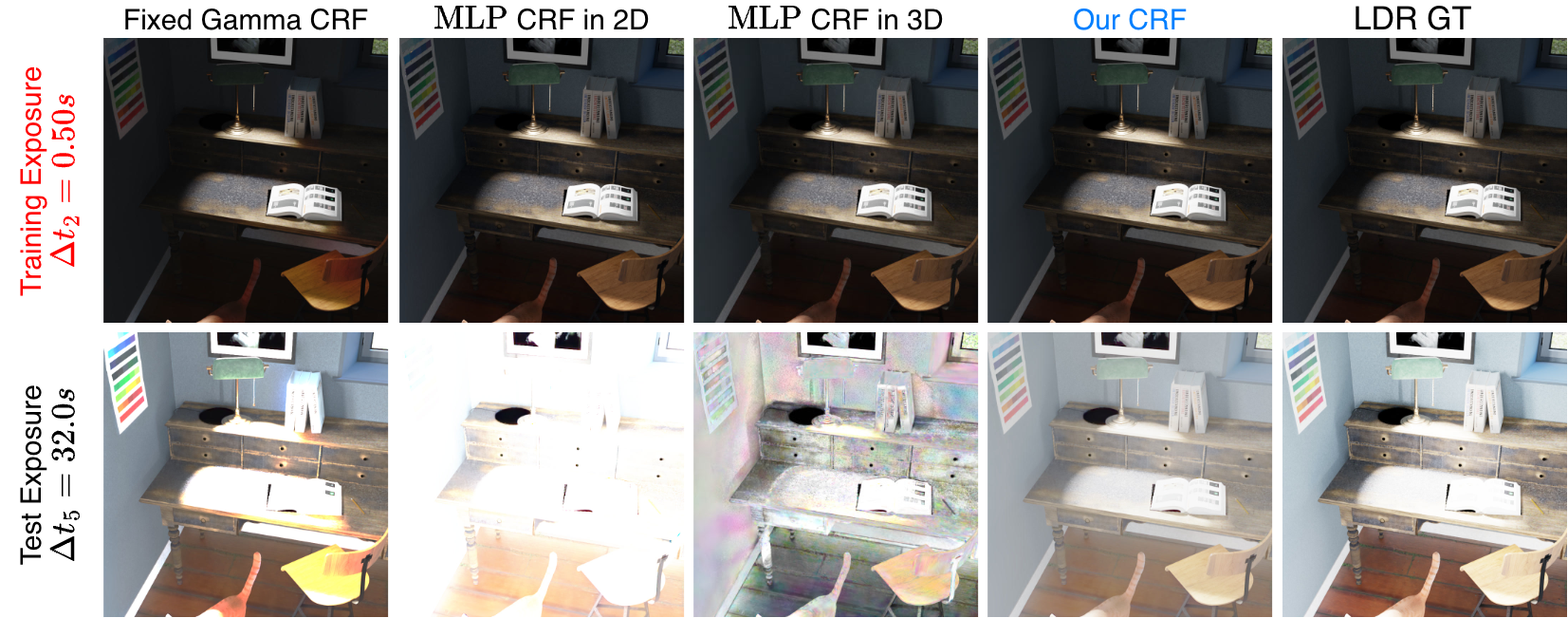}
    \end{center}
    \caption{Qualitative ablations on LDR views for replacing our proposed CRF with MLPs in 2D and 3D and with fixed gamma CRF. All configurations reproduce the training exposure, yet our CRF achieves much higher reconstruction fidelity at the test exposure.}
    \label{fig:apx_abl_crf}
\end{figure}

\textbf{LDR-Base Supervision.} In Figure \ref{fig:apx_ldr_base}, we qualitatively ablate LDR-base supervision by disabling its loss term in Equation \ref{eq:loss}. We observe that absence of LDR-base supervision severely impairs scene reconstruction, causing failure in texture recovery around the scene. This highlights the necessity of LDR-base supervision for robust HDR scene reconstruction. We further note that without LDR-base supervision HDR radiance is encoded arbitrarily across the LDR base and gain field, which lose their intended roles as LDR appearance and its HDR extension. This loss of a meaningful base–gain decomposition further precludes the 3D HDR editing application enabled by our base-gain representation and forgos joint reconstruction of the 3D LDR-base and HDR scenes.
\begin{figure}[h]
    \begin{center}
    \includegraphics[width=0.89\linewidth]{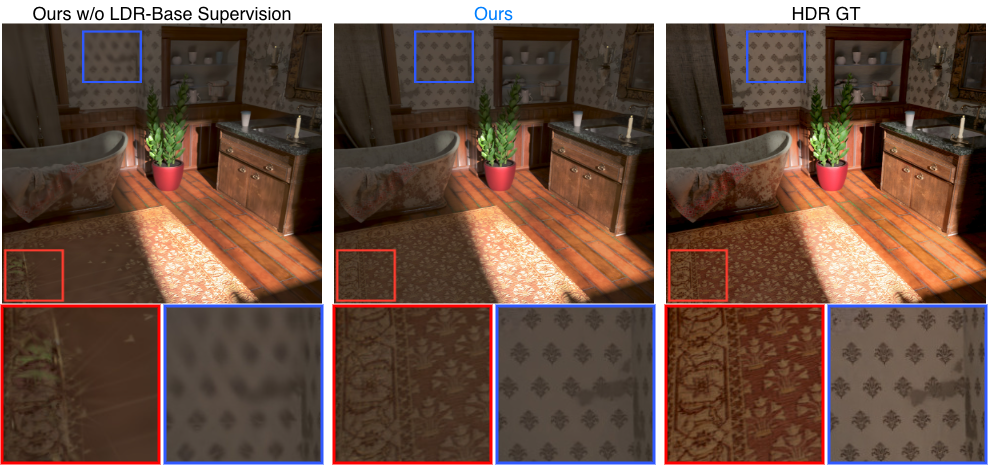}
    \end{center}
    \caption{Qualitative ablation of LDR-base supervision on HDR views. Removing this supervision severely impairs scene reconstruction, causing failure in texture recovery all across the scene.}
    \label{fig:apx_ldr_base}
\end{figure}

\end{document}

%% file: math_commands.tex
\usepackage{amsmath,amsfonts,bm}

\def\eqref#1{equation~\ref{#1}}

\def\1{\bm{1}}

\def\vmu{{\bm{\mu}}}
\def\vtheta{{\bm{\theta}}}

\def\vc{{\bm{c}}}
\def\vd{{\bm{d}}}

\def\vg{{\bm{g}}}
\def\vh{{\bm{h}}}

\def\vl{{\bm{l}}}

\def\mR{{\bm{R}}}
\def\mS{{\bm{S}}}

\def\mSigma{{\bm{\Sigma}}}

\DeclareMathAlphabet{\mathsfit}{\encodingdefault}{\sfdefault}{m}{sl}
\SetMathAlphabet{\mathsfit}{bold}{\encodingdefault}{\sfdefault}{bx}{n}

\def\sG{{\mathbb{G}}}

\newcommand{\R}{\mathbb{R}}

